\def\vphi{\boldsymbol{\phi}}
\def\vb{\mathbf{b}}
\def\vx{\mathbf{x}}
\def\vh{\mathbf{h}}
\def\vp{\mathbf{p}}
\def\vz{\mathbf{z}}
\def\vv{\mathbf{v}}
\def\dd{\mathrm{d}}
\def\di{\mathrm{i}}
\def\rr{\mathbb{R}}
\def\p{\mathrm{p}}
\def\r{\mathrm{r}}
\def\diag{\mathrm{diag}}
\def\czp{{\bf CZP}}
\theoremstyle{plain}
\theoremstyle{definition}
\theoremstyle{remark}
\def\ourtitle{Sample-efficient Surrogate Model for Frequency Response of Linear PDEs using Self-Attentive Complex Polynomials}
\def\soo{scattering coefficients}
\icmltitlerunning{\ourtitle}
\begin{document}

\twocolumn[
\icmltitle{\ourtitle}



\begin{icmlauthorlist}
\icmlauthor{Andrew Cohen$^*$}{comp}
\icmlauthor{Weiping Dou}{comp}
\icmlauthor{Jiang Zhu}{comp}
\icmlauthor{Slawomir Koziel}{uni}
\icmlauthor{Peter Renner}{comp}
\icmlauthor{Jan-Ove Mattsson}{comp}
\icmlauthor{Xiaomeng Yang}{comp}
\icmlauthor{Beidi Chen}{comp}
\icmlauthor{Kevin Stone}{comp}
\icmlauthor{Yuandong Tian$^*$}{comp}

\end{icmlauthorlist}

\icmlaffiliation{comp}{Meta AI}
\icmlaffiliation{uni}{Reykjavic University}

\icmlcorrespondingauthor{Andrew Cohen}{andrewcohen@meta.com}
\icmlcorrespondingauthor{Yuandong Tian}{yuandong@meta.com}

\icmlkeywords{Machine Learning, ICML}

\vskip 0.3in
]



\printAffiliationsAndNotice{\icmlEqualContribution} 

\begin{abstract}
Linear Partial Differential Equations (PDEs) govern the spatial-temporal dynamics of physical systems that are essential to building modern technology. When working with linear PDEs, designing a physical system for a specific outcome is difficult and costly due to slow and expensive explicit simulation of PDEs and the highly nonlinear relationship between a system's configuration and its behavior. In this work, we prove a parametric form that certain physical quantities in the Fourier domain must obey in linear PDEs, named the \emph{\czp{} (Constant-Zeros-Poles) framework}. Applying \czp{} to antenna design, an industrial application using linear PDEs (i.e., Maxwell's equations), we derive a sample-efficient parametric surrogate model that directly predicts its \emph{scattering coefficients} without explicit numerical PDE simulation. Combined with a novel image-based antenna representation and an attention-based neural network architecture, \czp{} outperforms baselines by $10\%$ to $25\%$ in terms of test loss and also is able to find 2D antenna designs verifiable by commercial software with $33\%$ greater success than baselines, when coupled with sequential search techniques like reinforcement learning. 
\end{abstract}

\section{Introduction}
Natural phenomena in mathematical physics such as heat diffusion, wave propagation, electromagnetic radiation, quantum mechanics and many more, are governed by linear Partial Differential Equations (PDEs)~\cite{treves} in the following form:
\begin{equation}
\frac{\partial^n \psi}{\partial t^n} = F(\psi, \nabla_\vx\psi, \ldots;\vh) \label{eq:linear-PDE} 
\end{equation}
where $\psi = \psi(\vx, t)$ is a quantity (e.g., electromagnetic field) that changes over space $\vx$ and time $t$, $F$ is a \emph{linear} function with respect to the quantity $\psi$ and its spatial derivatives of different orders, and $\vh$ is a \emph{design vector} that may nonlinearly determine the linear coefficients of $F$.  

Guided by linear PDEs, designing new physical systems with desired properties is the core practice of modern science and engineering. For example, by finding the shape of reflectors, directors, their relative angles, orientations and electric conductivity, one may design an antenna that can receive signals with specific radio frequency, according to Maxwell's equations. 

Due to the complicated nonlinear dependency between the design vector $\vh$ and the system's final behavior, it often requires large-scale, high fidelity simulation of the PDEs and many years of domain expertise to find an optimal $\vh$. The process is expensive in terms of both simulation and engineer time as engineers often iterate on system configurations using CPU-intensive commercial software \cite {CST,XFDTD}. This high computational overhead is a major bottleneck for rapid experimentation with different structures; A single simulation can take dozens of seconds to several weeks depending on the systematic complexity of a device. 
For this reason, developing a less computationally expensive \emph{surrogate} model to replace explicit simulation is desirable. 

In this work, we take a different path by looking at the temporal Fourier representation of the spatial-temporal quantity $\psi$ under linear PDEs. Surprisingly, it can be proven that its Fourier representation has a specific parametric form: any of its linear combinations, as well as their ratios, can be written as a rational function of complex polynomials with respect to frequency $\omega$, regardless of the specific form of the linear PDEs and initial conditions. 

This key insight, coined as \czp{} (Constant-Zeros-Poles) framework, enables us to develop sample-efficient surrogate models for important industrial-level applications such as antenna design. Specifically, we show that the \emph{scattering coefficients} $S_{11}(\omega)$ of an antenna can be written analytically as the ratio of two complex polynomials of the \emph{same} order, in which their global \textbf{C}onstants, \textbf{Z}eros and \textbf{P}oles are functions of the design choice $\vh$ that can be predicted by neural networks. Inspired by state-of-the-art mesh-based simulation techniques, to properly represent the design choice $\vh$, we propose a novel image-based representation of antennas geometries that captures important boundary information that are traditionally modeled by high resolution meshes. This representation is then tokenized and sent to a transformer-based encoder \cite{Vaswani2017} to capture the non-linear relationship between antenna topology and constants, zeros and poles to be predicted in the \czp{} framework.

Experiments demonstrate a $10\%$ to $25\%$ improvement of the \czp{} framework over multiple baselines on test set loss. Using the parametric form as a domain-specific inductive bias, we achieve better test error with limited data that are expensive to obtain via commercial software. Furthermore, when coupled with an optimization procedure like reinforcement learning, \czp{} can be used to find antenna topologies that meet design specifications, according to commercial EM modeling software, with $33\%$ greater success than baselines with only $40K$ training samples. This shows that \czp{} not only generalizes to unseen designs, but is also less likely to produce overoptimistic regions that the optimization procedure may exploit.

\section{Related Work}
{\bf EM surrogate modeling} In the EM based microwave circuit design, such as microwave filters, impedance-matching networks, multiplexers, etc., the equivalent-circuit, empirical, and semi-analytical models and combinations have been used as surrogate models with the links to the full-wave simulation \cite {Rayas1, Rayas2}. The same approaches have been rarely applied to antenna modeling, due to the fact that the radiating structures are too complex to lend themselves to analytical and/or circuit modeling. Broadly, there are two approaches to antenna surrogate modeling, coarser approximate physics-driven simulation \cite{Zhu_TAP_2007, Slawek_TAP_2013} or data-driven methods which model the computation performed by the simulator \cite {Slawek_L}.

{\bf Deep learning for solving PDEs:} Neural operators are end-to-end methods, formulated to be independent of the underlying mesh discretization and directly approximate the PDE operator between function spaces from samples~\cite{li2020fourier,li2020neural}. These approaches predict the 2D or 3D evolution of systems like Navier-Stokes or Darcy flow but the representations are not used to predict other quantities like the zeros and poles of the $S_{11}$ scattering coefficients. An orthogonal line of work uses supervised~\cite{Pfaff2021,Bhatnagar2019, Guo2016} or sequential methods~\cite{Yang22} for adaptively refining a mesh to model aerodynamics or fluid flow. 


\section{Parametric formula for linear PDEs in the frequency domain}
In this work, we focus on finding the structure of linear PDE in the frequency domain. When solving linear PDE in the form of Eqn.~\ref{eq:linear-PDE}, traditional methods discretize the space and convert the PDEs into the following ODEs~\cite{Weiland1977}:
\begin{equation}
\dot\vphi = A(\vh)\vphi \label{eq:discretized-pde}
\end{equation}
Note that in the original continuous formulation (Eqn.~\ref{eq:linear-PDE}), the quantity $\psi$ is indexed by spatial location $\vx$ and thus is infinite-dimensional. After discretization used in finite difference methods, $\vphi(t)$ is a vector of dimension $N$ at each time $t$, containing the value of $\psi$ (and its spatial derivative) at specific spatial locations. The linear operator $F$ now becomes a matrix $A(\vh)$ of size $N$-by-$N$. Each of its entry is now related to design vector $\vh$ and topological structure of the discretized grid cells. One important property for linear PDEs is that in its discretized form, $A(\vh)$ is a constant and does not change with $\vphi$.

For better understanding, here we put a concrete example of Eqn.~\ref{eq:discretized-pde}. Consider a one-dimensional wave equation $\frac{\partial^2 \psi}{\partial t^2} = c^2 \frac{\partial^2 \psi}{\partial x^2}$. Then by setting $\vphi = [\psi(x_1),\ldots, \psi(x_N), \frac{\partial\psi}{\partial t}(x_1),\ldots, \frac{\partial\psi}{\partial t}(x_N)]^\top \in \rr^{2N}$, the wave equation can be written in the form of Eqn.~\ref{eq:discretized-pde} with $$A = \left[\begin{array}{cc} 
0 & 1 \\
c^2 B & 0
\end{array}\right],$$ where $B\in \rr^{N\times N}$ spatially discretizes the operator $\frac{\partial^2}{\partial x^2}$.

From the initial condition $\phi(\vx,0)$, classic techniques (e.g., finite element methods \cite{Weiland1977}) simply perform temporal integration to get the spatial-temporal signal $\phi(\vx,t)$, from which any quantities that are relevant to the design goals can be computed.  

In this work, we focus on the property of (single-sided) temporal Fourier transform $\hat \vphi(\omega)$ of the spatial-temporal signal $\vphi(t)$:
\begin{equation}
\hat\vphi(\omega) := \int_0^{+\infty} \vphi(t) e^{-\di\omega t} \dd t
\end{equation}
where $\di$ is the imaginary unit and $\omega$ is the frequency. A surprising finding is that, there exists parametric formula for a family of quantities without numerical integration, as presented formally in the following theorem: 
\begin{restatable}{theorem}{formulalinearpde}
\label{thm:formlalinearpde}
For discretized linear PDEs in the form of Eqn.~\ref{eq:discretized-pde}, if $A(\vh)$ is diagonalizable, then any spatially linear combined signals $\vb^\top\hat \vphi(\omega)$ in the Fourier domain is a ratio of two complex polynomials with respect to frequency $\omega$. 
\end{restatable}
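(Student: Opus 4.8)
The plan is to solve the ODE explicitly using the diagonalizability hypothesis, transform the resulting sum of exponentials term by term, and then collect everything over a common denominator. First I would write $A(\vh) = V\Lambda V^{-1}$ with $\Lambda = \diag(\lambda_1,\dots,\lambda_N)$ the (generally complex) eigenvalues, so that the solution of Eqn.~\ref{eq:discretized-pde} with initial condition $\vphi(0)$ is $\vphi(t) = V e^{\Lambda t} V^{-1}\vphi(0)$. Reading off components, each coordinate $\phi_j(t)$ is a finite linear combination $\sum_{k=1}^N c_{jk} e^{\lambda_k t}$, where the coefficients $c_{jk}$ absorb $V$, $V^{-1}$ and the initial condition. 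Diagonalizability is exactly what guarantees that only pure exponentials appear, with no $t^m e^{\lambda t}$ terms coming from Jordan blocks.

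Next I would apply the single-sided transform to each exponential. The key computation is $\int_0^{+\infty} e^{\lambda_k t} e^{-\di\omega t}\,\dd t = 1/(\di\omega - \lambda_k)$, so that $\hat\phi_j(\omega) = \sum_{k=1}^N c_{jk}/(\di\omega - \lambda_k)$. Taking the spatial linear combination and swapping the two finite sums gives $\vb^\top\hat\vphi(\omega) = \sum_{k=1}^N d_k/(\di\omega - \lambda_k)$ with $d_k := \sum_j b_j c_{jk}$. This is already a partial-fraction expression; placing it over the common denominator $\prod_{k}(\di\omega - \lambda_k)$ yields a numerator of degree at most $N-1$ and a denominator of degree $N$ in $\omega$, i.e. a ratio of two complex polynomials, which is the claim.

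A cleaner route that avoids solving the ODE is to transform Eqn.~\ref{eq:discretized-pde} directly: integration by parts gives $\di\omega\,\hat\vphi(\omega) - \vphi(0) = A\hat\vphi(\omega)$, hence $\hat\vphi(\omega) = (\di\omega I - A)^{-1}\vphi(0)$ and $\vb^\top\hat\vphi(\omega) = \vb^\top(\di\omega I - A)^{-1}\vphi(0)$. Using $(\di\omega I - A)^{-1} = \mathrm{adj}(\di\omega I - A)/\det(\di\omega I - A)$, the denominator is the characteristic polynomial of $A$ evaluated at $\di\omega$ (degree $N$) and every entry of the adjugate is a polynomial of degree at most $N-1$, giving the same rational form; note this version does not even require diagonalizability, which suggests the hypothesis is there mainly to keep the exponential-sum argument clean.

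The main obstacle is convergence of the single-sided integral, not the algebra. The identity $\int_0^\infty e^{(\lambda_k - \di\omega)t}\,\dd t = 1/(\di\omega-\lambda_k)$ holds classically only when $\mathrm{Re}(\lambda_k) < 0$, and the boundary term in the integration-by-parts argument requires $\vphi(t)e^{-\di\omega t}\to 0$. For dissipative physical systems the eigenvalues lie in the left half-plane, but conservative cases (e.g. the undamped wave equation, whose eigenvalues are purely imaginary) sit on the boundary of convergence. I would handle this by interpreting $\hat\vphi(\omega)$ as the Laplace transform $\int_0^\infty\vphi(t)e^{-st}\,\dd t$ evaluated at $s=\di\omega$: the Laplace transform is analytic and rational for $\mathrm{Re}(s)$ large, and the rational expression $\vb^\top(\di\omega I - A)^{-1}\vphi(0)$ is its unique analytic continuation, so the parametric form persists as an identity of meromorphic functions even when the integral is only conditionally convergent or must be read distributionally.
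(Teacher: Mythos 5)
Your main argument is essentially the paper's own proof: diagonalize $A=U\Lambda U^{-1}$, write the solution as a sum of pure exponentials, apply $\int_0^{+\infty}e^{\lambda_k t}e^{-\di\omega t}\,\dd t = 1/(\di\omega-\lambda_k)$ entrywise, and conclude that every component of $\hat\vphi(\omega)$, hence any linear combination $\vb^\top\hat\vphi(\omega)$, is rational in $\omega$; the paper likewise justifies convergence by assuming stability ($\Re[\lambda_i]<0$), and your common-denominator step is just the explicit version of its closing sentence. What you add beyond the paper is genuinely different and worth noting. First, the resolvent argument $\hat\vphi(\omega)=(\di\omega I-A)^{-1}\vphi(0)$ combined with $\mathrm{adj}(\di\omega I-A)/\det(\di\omega I-A)$ proves the rational form, with the characteristic polynomial as denominator, for an \emph{arbitrary} matrix $A$: diagonalizability is not needed, since Jordan structure would only contribute higher-order poles $1/(\di\omega-\lambda)^{m+1}$, which are still rational. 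The paper does not make this observation, so its hypothesis is in fact superfluous for this theorem (though the eigendecomposition is reused in its double-sided computation). Second, your Laplace-transform/analytic-continuation reading addresses a point of rigor the paper glosses over: for conservative systems---including the undamped 1D wave equation that the paper itself uses to illustrate Eqn.~\ref{eq:discretized-pde}---the eigenvalues are purely imaginary, the single-sided integral fails to converge classically, and interpreting $\vb^\top(\di\omega I-A)^{-1}\vphi(0)$ as the analytic continuation (boundary value) of the Laplace transform is the correct way to keep the parametric identity meaningful. So your proposal subsumes the paper's proof and strengthens it on both generality and convergence.
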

This leads to the following corollary that is useful to compute any signal from linear PDEs in the frequency domain:
\begin{restatable}[Parametric Formula for Linear PDEs]{corollary}{czpformulalinearpde}
\label{co:czpformulalinearpde}
For any two linear combination signals $\vb_1^\top\hat \vphi(\omega)$ and $\vb_2^\top \hat \vphi(\omega)$ in Linear PDEs, there exists constant $c_0$, $K_1$ zeros $\{z_k\}$ ($1\le k\le K_1$) and $K_2$ poles $\{p_l\}$ ($1\le l\le K_2$) so that:
\begin{equation}
\frac{\vb_1^\top\hat\vphi(\omega)}{\vb_2^\top\hat\vphi(\omega)} = c_0\prod_{k=1}^{K_1}\left(\omega - z_k\right)\prod_{l=1}^{K_2}\left(\omega - p_l\right)^{-1}
\label{czp_freq_response_linear_pde}
\end{equation}
Note that $c_0$, $\{z_k\}$ and $\{p_l\}$ are all complex functions of the design vector $\vh$ and linear coefficients $\vb_1$ and $\vb_2$.
\end{restatable}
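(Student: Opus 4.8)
The plan is to derive the corollary directly from Theorem~\ref{thm:formlalinearpde} by making the construction behind that theorem explicit and then exploiting the fact that the numerator and denominator signals share the \emph{same} characteristic denominator. First I would use diagonalizability: write $A(\vh) = V\Lambda V^{-1}$ with $\Lambda = \diag(\lambda_1,\ldots,\lambda_N)$, so that the solution of Eqn.~\ref{eq:discretized-pde} is $\vphi(t) = e^{At}\vphi(0) = \sum_{j} \vv_j c_j e^{\lambda_j t}$, where $\vv_j$ are the columns of $V$ and $\vc = V^{-1}\vphi(0)$. Taking the single-sided temporal Fourier transform term by term gives the explicit form
\begin{equation}
\hat\vphi(\omega) = \sum_{j=1}^N \frac{\vv_j c_j}{\di\omega - \lambda_j},
\end{equation}
so that for any linear functional $\vb$,
\begin{equation}
\vb^\top\hat\vphi(\omega) = \sum_{j=1}^N \frac{\beta_j(\vb)}{\di\omega - \lambda_j} = \frac{P_{\vb}(\omega)}{\prod_{j=1}^N(\di\omega - \lambda_j)},
\end{equation}
where $\beta_j(\vb) := (\vb^\top\vv_j)\,c_j$ and $P_{\vb}$ is the polynomial of degree at most $N-1$ obtained after clearing denominators.

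The key observation is that the denominator $D(\omega) := \prod_{j=1}^N(\di\omega - \lambda_j)$ depends only on the spectrum of $A(\vh)$ and not on the functional $\vb$. Hence applying the previous display to both $\vb_1$ and $\vb_2$ and forming their ratio cancels $D(\omega)$ identically, leaving a ratio of two polynomials:
\begin{equation}
\frac{\vb_1^\top\hat\vphi(\omega)}{\vb_2^\top\hat\vphi(\omega)} = \frac{P_{\vb_1}(\omega)}{P_{\vb_2}(\omega)}.
\end{equation}
I would then invoke the fundamental theorem of algebra to factor each polynomial into linear terms, $P_{\vb_1}(\omega) = a_1\prod_{k=1}^{K_1}(\omega - z_k)$ and $P_{\vb_2}(\omega) = a_2\prod_{l=1}^{K_2}(\omega - p_l)$ with $K_1,K_2 \le N-1$. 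Setting $c_0 := a_1/a_2$ and reading off the roots of $P_{\vb_1}$ as the zeros $\{z_k\}$ and the roots of $P_{\vb_2}$ as the poles $\{p_l\}$ yields exactly Eqn.~\ref{czp_freq_response_linear_pde}. The asserted dependence of $c_0$, $\{z_k\}$ and $\{p_l\}$ on $\vh$, $\vb_1$ and $\vb_2$ is then inherited from the dependence of $\Lambda$, $V$, $\vc$ and the coefficients $\beta_j(\vb)$ on these quantities.

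The remaining care is mostly bookkeeping rather than a genuine obstacle: any common linear factors shared by $P_{\vb_1}$ and $P_{\vb_2}$ cancel, so the stated $K_1$ and $K_2$ should be understood after reduction to lowest terms (equivalently, the $z_k$ and $p_l$ are the zeros and poles of the reduced rational function). The one analytic point worth flagging is the convergence of the integral $\int_0^{+\infty} e^{(\lambda_j - \di\omega)t}\,\dd t$, which is only absolutely convergent for $\omega$ in the region where $\mathrm{Re}(\di\omega - \lambda_j) > 0$ for all $j$; since Theorem~\ref{thm:formlalinearpde} already establishes the rational form, the identity extends to all $\omega$ by analytic continuation, and the factored expression above holds as an identity of rational functions. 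I expect this convergence and continuation step, rather than the elementary algebra, to be the most delicate part to state rigorously.
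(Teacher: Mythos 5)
Your proof is correct, and its skeleton (Theorem~\ref{thm:formlalinearpde} plus the fundamental theorem of algebra) matches the paper's, but your middle step is genuinely different and stronger. The paper's proof treats the two signals as black-box rational functions: it cites the theorem, notes that a ratio of rational functions is again rational, and factors numerator and denominator by the fundamental theorem of algebra. You instead unfold the theorem's construction and observe that $\vb_1^\top\hat\vphi(\omega)$ and $\vb_2^\top\hat\vphi(\omega)$ share the \emph{same} characteristic denominator $D(\omega)=\prod_{j=1}^N(\di\omega-\lambda_j)$, which cancels identically in the ratio. This buys two things the paper's argument does not state: explicit degree bounds $K_1,K_2\le N-1$, and the structural insight that the poles $\{p_l\}$ of the ratio are \emph{not} the eigenvalues of $A(\vh)$ but the roots of the numerator polynomial $P_{\vb_2}$ associated with the second signal --- i.e., the system's natural frequencies drop out entirely. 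The trade-off is that your argument depends on the internals of the theorem's proof (diagonalizability and the partial-fraction form), whereas the paper's three-line proof uses only the theorem's statement and would survive any other proof of it. One small simplification: your concern about analytic continuation is unnecessary here. The paper's proof of the theorem assumes stability, $\Re[\lambda_j]<0$, so for real $\omega$ one has $\Re(\di\omega-\lambda_j)=-\Re[\lambda_j]>0$ for every $j$, and the integral $\int_0^{+\infty}e^{(\lambda_j-\di\omega)t}\,\dd t$ converges absolutely on the whole real frequency axis; no continuation step is needed for the identity to hold where the transform is defined.
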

Please check Appendix~\ref{sec:proof} for all proofs. Therefore, parametric forms of many useful quantities can be obtained, e.g., \emph{frequency response} $\phi(\vx_1,\omega)$ given initial condition of linear PDE, \emph{transfer function} $\hat\phi(\vx_1,\omega) / \hat\phi(\vx_2,\omega)$ between two spatial locations $\vx_1$ and $\vx_2$, etc. For any specific quantity (e.g., the scattering coefficients $S_{11}(\omega)$ in antenna design, as mentioned below), learning a neural network that predicts \textbf{c}onstants $c_0$, \textbf{z}eros $\{z_k\}$ and \textbf{p}oles $\{p_l\}$ from the design vector $\vh$ is our proposed \czp{} framework for linear PDEs. 

\section{Application of \czp{} to Antenna Design}\label{prelims}
We now apply our proposed \czp{} framework of linear PDE to Antenna Design problems. Finding antenna design that satisfies the requirement with small dimension, low power consumption and low cost may enable reduction of the physical volume and shape of devices, and leads to seamless wireless connectivity with augmented reality (AR).  



In antenna design, the goal is to find a good spatial configuration of materials in a 2D/3D space, so that the overall system demonstrate a specific property in the frequency domain, e.g., strong absorption at specific frequency. The relationship between an antenna's topology and its properties is governed by the well-known Maxwell's equations which can be written in the form of linear PDEs satisfying Eqn.~\ref{eq:linear-PDE}. In this case, the quantity $\psi$ (and its discretized version $\vphi$) now contains electromagnetic quantities (e.g., electric/magnitude field strength and potentials, voltages and currents, etc) at each discretized grid cell. 

\begin{figure}
    \includegraphics[width=.48\textwidth]{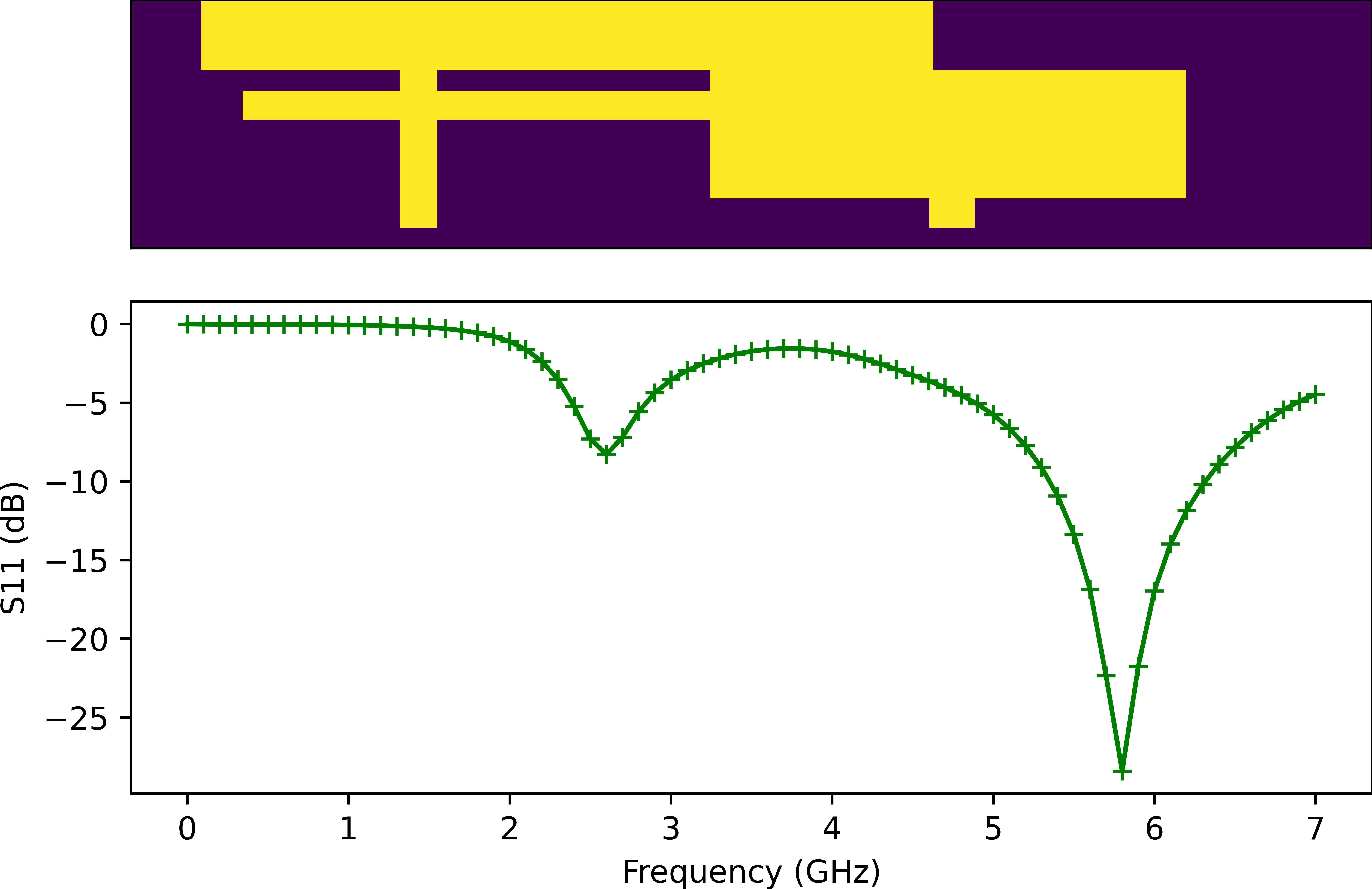}
    \caption{{\bf Top:} An instance of an antenna from the five patch example in a AR device. Yellow corresponds to patches of metallic substrate and purple corresponds to the board on which the antenna sits. {\bf Bottom:} The corresponding frequency response of the given antenna.}
    \label{ant_and_resp}
\end{figure}

\textbf{The Goal of Antenna Design.} Antenna engineers aim to find the right design choice $\vh$ so that specific antenna design targets are met over the frequency bands of interest, for example, there needs to be dips (i.e., more absorption) in $S_{11}(\omega)$ at WiFi 2.4GHz band and WiFi 5-7GHz band for WiFi 6E, as shown in Figure~\ref{ant_and_resp} bottom row.

The frequency properties of the antenna is described by the logarithm of modulus of the \emph{\soo{}} $\log |S_{11}(\omega)|$, typically expressed in decibels (dB). $S_{11}(\omega)$, as a function of frequency $\omega$, is defined as the following~\cite{caspers2012rf}:
\begin{equation}
S_{11}(\omega) := \frac{Z_{\mathrm{in}}(\omega)/Z_0 - 1}{Z_{\mathrm{in}}(\omega)/Z_0 + 1}
\end{equation}
where $Z_{\mathrm{in}}(\omega)$ is the input impedance of the antenna (typically a complex number), determined by the design vector $\vh$. Figure~\ref{ant_and_resp} shows an example antenna design and its $S_{11}(\omega)$. In real-world applications, often other requirements are needed, e.g., low latency, low power consumption and so on, which we leave for future work.  

\subsection{An analytic formula for computing scattering coefficients $S_{11}(\omega)$ of antenna}
Thanks to the insights given by Corollary~\ref{czp_freq_response_linear_pde}, we arrive at an analytic formula for $S_{11}(\omega)$ without performing numerical integration of Maxwell's equations:

\begin{restatable}[Analytical Structure of Scattering Coefficients]{theorem}{czpstruct}
If $A(\vh)$ in discretized Maxwell's equations are diagonalizable, then $\log|S_{11}(\omega)|$ has the parametric form:
\begin{equation}
\log|S_{11}(\omega)| = \log|c_0(\vh)| + \sum_{k=1}^K \log\frac{|\omega - z_k(\vh)|}{|\omega - p_k(\vh)|}
\label{czp_freq_response}
\end{equation}
where the constant $c_0(\vh)$, zeros $\{z_k(\vh)\}_{k=1}^K$ and poles $\{p_k(\vh)\}_{k=1}^K$ are complex functions of the design choice $\vh$.
\end{restatable}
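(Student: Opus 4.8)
The plan is to obtain the claimed factorized form as a direct consequence of Corollary~\ref{co:czpformulalinearpde} applied to the input impedance $Z_{\mathrm{in}}(\omega)$, followed by a M\"obius transformation and a degree-matching argument. First I would make the Fourier-domain state explicit: taking the single-sided transform of Eqn.~\ref{eq:discretized-pde} and using $\widehat{\dot\vphi}(\omega) = \di\omega\hat\vphi(\omega) - \vphi(0)$ gives $(\di\omega I - A(\vh))\hat\vphi(\omega) = \vphi(0)$, hence $\hat\vphi(\omega) = (\di\omega I - A(\vh))^{-1}\vphi(0)$. Diagonalizing $A(\vh) = U\,\diag(\lambda_1,\ldots,\lambda_N)\,U^{-1}$, any linear functional becomes $\vb^\top\hat\vphi(\omega) = \sum_j a_j(\vb)/(\di\omega - \lambda_j)$, a proper rational function with common denominator $D(\omega) := \prod_j(\di\omega - \lambda_j)$. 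The input impedance is a ratio of two such functionals (the port voltage $V$ over the port current $I$, each a spatial linear combination of $\hat\vphi$), so $Z_{\mathrm{in}}(\omega) = \vb_V^\top\hat\vphi(\omega)/\vb_I^\top\hat\vphi(\omega) = N_V(\omega)/N_I(\omega)$, where $D(\omega)$ cancels and $N_V, N_I$ are polynomials of degree at most $N-1$. This is exactly the content of Corollary~\ref{co:czpformulalinearpde}.

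Next I would substitute into the definition of the scattering coefficient. Writing $S_{11}(\omega) = (Z_{\mathrm{in}}/Z_0 - 1)/(Z_{\mathrm{in}}/Z_0 + 1)$ and clearing $N_I$ yields
\begin{equation}
S_{11}(\omega) = \frac{N_V(\omega) - Z_0 N_I(\omega)}{N_V(\omega) + Z_0 N_I(\omega)} =: \frac{\mathcal{N}(\omega)}{\mathcal{D}(\omega)},
\end{equation}
so $S_{11}$ is again a ratio of complex polynomials. Invoking the fundamental theorem of algebra, I would factor $\mathcal{N}(\omega) = a\prod_k(\omega - z_k)$ and $\mathcal{D}(\omega) = b\prod_k(\omega - p_k)$, absorb the leading-coefficient ratio into $c_0(\vh) := a/b$, and obtain $S_{11}(\omega) = c_0(\vh)\prod_k(\omega - z_k(\vh))(\omega - p_k(\vh))^{-1}$. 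Taking $\log|\cdot|$ and using multiplicativity of the modulus then gives Eqn.~\ref{czp_freq_response} term by term, with the dependence of $c_0, \{z_k\}, \{p_k\}$ on $\vh$ inherited from $A(\vh)$, its eigendecomposition, and the port functionals.

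The main obstacle is establishing that $\mathcal{N}$ and $\mathcal{D}$ have the \emph{same} number $K$ of roots, i.e.\ equal degree, which is what makes the single index $k$ in Eqn.~\ref{czp_freq_response} legitimate rather than the two separate counts $K_1 \ne K_2$ allowed by Corollary~\ref{co:czpformulalinearpde}. I would settle this by a leading-coefficient computation: the coefficient of $\omega^{N-1}$ in $N_V$ equals, up to the common factor $\di^{N-1}$, the quantity $\sum_j a_j(\vb_V) = \vb_V^\top U U^{-1}\vphi(0) = \vb_V^\top\vphi(0) = V(0)$, and likewise that of $N_I$ is $I(0)$. Hence the top-degree coefficients of $\mathcal{N}$ and $\mathcal{D}$ are proportional to $V(0) - Z_0 I(0)$ and $V(0) + Z_0 I(0)$, which are both nonzero—forcing $\deg\mathcal{N} = \deg\mathcal{D} = N-1$ and $K = N-1$—except in the degenerate, non-generic case of an exactly (anti)matched initial port condition $V(0) = \pm Z_0 I(0)$. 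This is also corroborated by passivity, since $|S_{11}(\omega)| \le 1$ with a nonzero limit as $|\omega| \to \infty$ rules out unequal degrees; I would note the genericity caveat explicitly rather than claim equal degree unconditionally.
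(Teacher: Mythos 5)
Your proposal is correct and follows essentially the same route as the paper's proof: express the port voltage and current as linear functionals of $\hat\vphi(\omega)$, invoke Theorem~\ref{thm:formlalinearpde} (equivalently Corollary~\ref{co:czpformulalinearpde}) to write $Z_{\mathrm{in}}(\omega)$ as a ratio of complex polynomials, substitute into the M\"obius form of $S_{11}$ to obtain $\bigl(Q_1(\omega) - Z_0 Q_2(\omega)\bigr)/\bigl(Q_1(\omega) + Z_0 Q_2(\omega)\bigr)$, and factor numerator and denominator by the fundamental theorem of algebra before taking $\log|\cdot|$. The one place you genuinely go beyond the paper is the equal-degree step. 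The paper simply asserts that the two polynomials have the same degree $K$ and relegates the degenerate case to a remark: if one leading term cancels, Eqn.~\ref{czp_freq_response} still holds effectively, by letting a zero or pole drift far from the frequency band of interest so that its factor is nearly constant and absorbed into $c_0$. You instead prove \emph{generic} equality of degrees by a leading-coefficient computation: over the common denominator $\prod_j(\di\omega - \lambda_j)$, the $\omega^{N-1}$ coefficients of $N_V$ and $N_I$ are $\di^{N-1}\vb_V^\top\vphi(0) = \di^{N-1}V(0)$ and $\di^{N-1}I(0)$ (an initial-value-theorem identity), so the leading coefficients of $\mathcal{N}$ and $\mathcal{D}$ are proportional to $V(0) \mp Z_0 I(0)$, both nonzero unless the initial port condition is exactly (anti)matched. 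This is sharper than the paper, which only calls the cancellation ``rare'' without characterizing it; your caveat plays the same role as the paper's remark but with a concrete criterion. One minor quibble: the passivity corroboration is partly heuristic, since $|S_{11}| \le 1$ only rules out $\deg\mathcal{N} > \deg\mathcal{D}$, while excluding $\deg\mathcal{N} < \deg\mathcal{D}$ requires the additional physical assumption that $|S_{11}|$ has a nonzero high-frequency limit; since you present it only as a sanity check and not as the argument, nothing in the proof rests on it.
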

Note that since almost all squared matrices are diagonalizable~\cite{horn2012matrix}, the assumption is not strong. One interesting property is that due to the homogenous structure of $S_{11}(\omega)$, its parametric form has the \emph{same} number of zeros (i.e., roots of the nominator) and poles (i.e., roots of the denominator), reducing one extra hyperparameters to tune. To learn these complex functions, we match the predicted $S_{11}(\omega)$ from the formula with the ground truth one provided by existing commercial software, and train in an end-to-end manner. With this formulation, we avoid any forward numerical integration and arrive at the quantity we want in one inference pass. 

\subsection{Parameterization of design vector $\vh$} 
\label{sec:parameterization-antenna}
In this work, we mainly focus on 2D antenna design (Fig.~\ref{ant_and_resp} top row). We specify our design choice vector $\vh$ as follows:

{\bf Substrate.} The rectangular printed circuitry board of width $S_x$ and height $S_y$ on which the other components sit. The substrate has thickness $S_z$ and dielectric permittivity $\epsilon_r$.

{\bf Ground plane.} A solid rectangle extending through the entire substrate in the $x$ direction and partially in the $y$ direction.

{\bf Discrete port.} The port location is the coordinate $p_x$, $p_y$ and is dependent on one of the front-side metallic patches.

{\bf Front-side metallic patches.} The antenna contains $M$ rectangular metallic patches which can freely move within the substrate area or pre-determined ranges.  The $m$-th patch $p_m$ is defined by its width and height $s_{m,x}$, $s_{m,y}$ and, the coordinate of the bottom left corner $l_{m,x}$, $l_{m,y}$. When the boundary of a metallic patch goes beyond the substrate, the excess is simply clipped. When patches overlap, there is no increase in the thickness; they combine to make a single metallic patch that is no longer rectangular.

Combining all these specifications, we now have an overall design choice vector defined as 
\begin{equation}
\vh = \{S_x,S_y,S_z,p_x,p_y,\{s_{m,x},s_{m,y},l_{m,x},l_{m,y}\}_{m=1}^M\}.
\end{equation}

{\bf Five Patch Example.} In this work, we consider an antenna example with an FR-4 substrate that is $30$mm by $6$mm and 5 front-side metallic patches with fixed dimensions and location boundaries (see Appendix for details). Additionally, we assume that the only degrees of freedom are the locations $\{s_{m,x}, s_{m,y}\}$ of each of the 5 patches as defined by the coordinates of the bottom left corner. We constrain the problem as such for experimental simplicity and acknowledge this is a simplified setting with respect to production-tier antenna optimization. However, the proposed surrogate model is agnostic to the assumption on patches and the optimisation procedure can be easily extended to variable patches of varying dimensions.

\section{Network Architecture}\label{method}
In this section, we discuss the details of the neural network models used to predict the constant, zeros and poles. Specifically, we propose a novel image representation for a 2D antenna inspired by the mesh representations commonly used by EM simulators. Then, following the analysis in the previous section, we introduce our image-based transformer architecture which predicts the zeros and poles of \soo{}.

\subsection{Image representation}
Mesh-based finite element methods underpin many of the available simulation tools in electromagnetics and other fields \cite{Pardo2007}. The mesh converts the underlying PDEs of the system into an ODE solvable by finite element methods \cite{Weiland1977}. Mesh representations use the fact that an antenna's resonance characteristics are directly related to its local and global topological structure. This motivates the use of images for learning a surrogate model as it contains the same local and global spatial information. A model would have to cope with a naive representation (i.e. the coordinates of front-side metallic patches) by learning these spatial relationships.

A critical component of successful meshing is to generate non-uniform, adaptive meshes which allocate high resolution, dense meshing to areas in which the quantity $\vphi$ may change rapidly (e.g., at sharp corners). Adaptive meshing enables the simulation of systems unsolvable by traditional discretization methods \cite{Pfaff2021}. Guided by this, we posit that an image representation of an antenna should provide the key regions (i.e., boundaries and corners of substrate) explicitly so that a neural network does not need to spend unnecessary computation learning these features.

\begin{figure}
    \includegraphics[width=.48\textwidth]{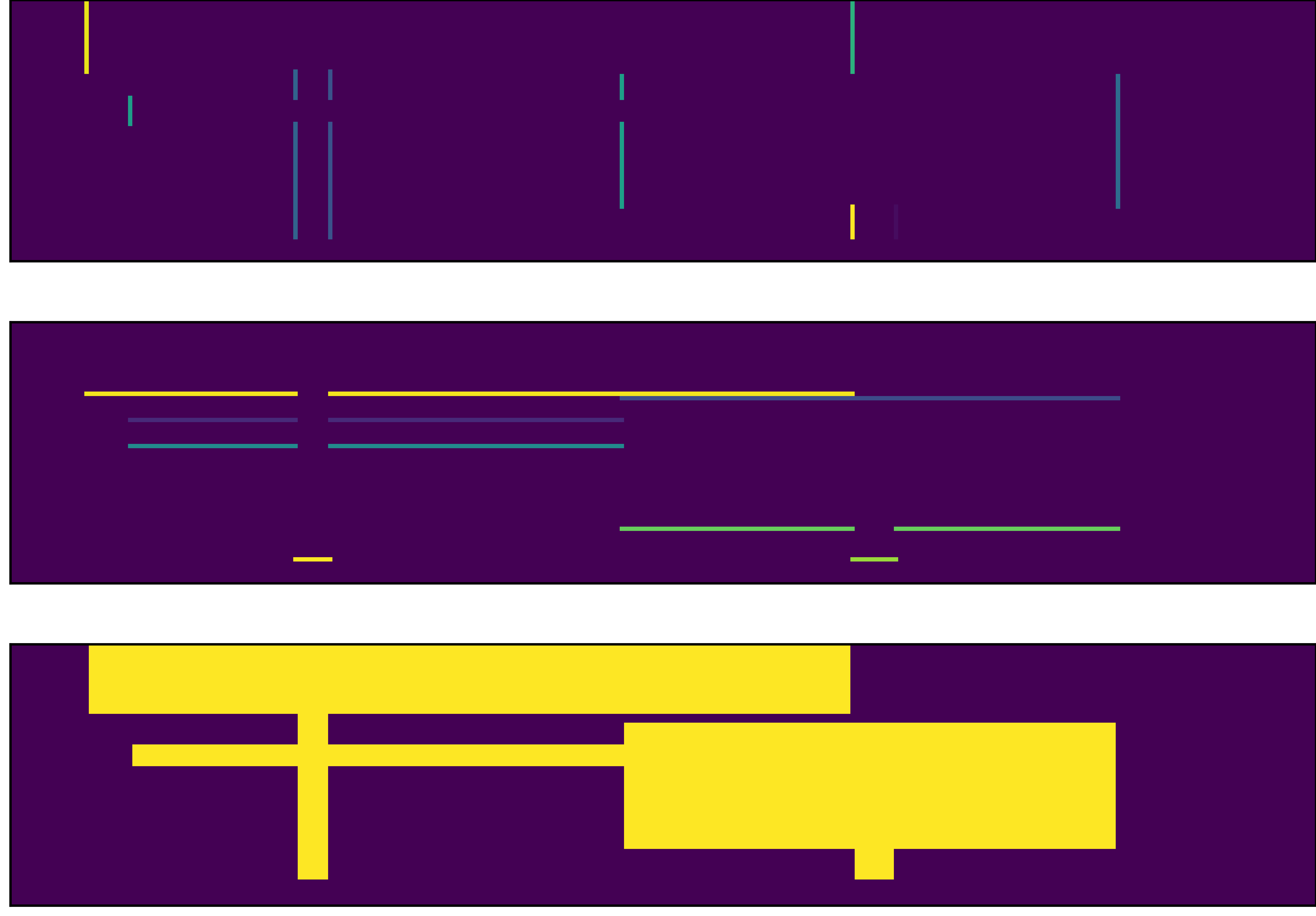}
    \vspace{-.5cm}
    \caption{Three channel image representation. {\bf Top:} Boundary values represent distance to nearest pixel in the $x$-direction.  {\bf Middle:} Boundary values represent distance to nearest pixel in the $y$-direction. {\bf Bottom:} Binary interior of the antenna. This channel does not contain boundaries.}
    \label{fig:image_input}
\end{figure}

We propose a three channel image representation. The first two channels provide the boundary locations in the $x$ and $y$ directions where pixel values $v\in [0,1]$ are floating point to represent the distance to the nearest pixel in the $x$ or $y$ directions, respectively. For example, given the bottom left $(x_{bl}, y_{bl})$ and top right $(x_{tr}, y_{tr})$ floating-number coordinates of a rectangular patch, we compute the pixel indices as the floor, 
\begin{equation*}
    \bar{x}_{bl} = \lfloor x_{bl}\rfloor,\ \bar{y}_{bl} = \lfloor y_{bl}\rfloor,\ \bar{x}_{tr} = \lfloor x_{tr}\rfloor,\ \bar{y}_{tr} = \lfloor y_{tr}\rfloor.
\end{equation*}
Then, we compute the values
\begin{align*}
    v_{l} = 1 - (x_{bl} - \bar{x}_{bl}),\ v_{r} = x_{tr} - \bar{x}_{tr}\\
    v_{b} = 1 - (y_{bl} - \bar{y}_{bl}),\ v_{t} = y_{tr} - \bar{y}_{tr}
\end{align*}

where $v_{l}$, $v_{r}$,$v_{b}$,$v_{t}$ correspond to the left, right, bottom and top boundary values, respectively. The left/bottom boundary is subtracted from $1$ where as the right/top is not because the floor function has a subtly different semantic meaning between these cases; Without the loss of generality, the floor of the left/bottom boundary {\it is not} contained inside the interior of the patch whereas the floor of the right/top {\it is}. We chose this design as it enables sensible image dimensions (i.e. $60 \times 300$ for a $6$mm $\times 30$mm image with a resolution of $10$ pixels to $1$mm) however others are possible. Finally, note, separating the $x$ and $y$ boundaries into two channels enables explicit representation of the corners of patches. 

Finally, a third channel provides the {\it interior} of the antenna as a binary image where $v = 1$ for all index pairs $x,\ y$ such that $x \in [\bar{x}_{bl}+1 , \bar{x}_{tr}-1],\ y \in [\bar{y}_{bl}+1 , \bar{y}_{tr}-1]$ and $v=1$. Please see Algorithm~\ref{imgen} in Appendix~\ref{antenna_specs} for pseudocode of the process and Figure~\ref{fig:image_input} for an example image. Some details are omitted such as patch dimensions which go beyond the board or overlapping patches but these are straightforwardly handled via clipping and masking.


\subsection{Surrogate Model}
In this section, we propose an architecture for a surrogate model which predicts the zeros and poles directly from the image representation which is then used to compute \soo{}. The architecture is based on the Visual Transformer~\cite{Wu20} which is motivated by the insight that local, spatial components such as boundaries between substrate of the antenna should be tokenized and then used by a transformer~\cite{Vaswani2017} to compute the global characteristics.

Given the input image $\mathbf{I} = \mathbf{I}(\vh) \in \rr^{3 \times HW}$ as a function of design choice $\vh$, we first augment it with two additional channels of linearly spaced $x$ and $y$ coordinates~\cite{Liu2018}, to yield augmented image $\hat{\mathbf{I}} \in \rr^{5\times HW}$. This is because the specific location of antenna components, in addition to its topology, determines the corresponding frequency response. 
Then, a CNN takes $\hat{\mathbf{I}}$ as an input and generates feature maps $\mathbf{X} \in \rr^{HW\times C}$ where $H$, $W$ and $C$ are the height, width and channel dimension, respectively. A filter-based tokenizer \cite{Zhang2019} generates $L$ visual tokens $\mathbf{T} \in \rr^{L\times C}$ by mapping each pixel via a pointwise convolution to $L$ groups with matrix $\mathbf{W} \in \rr^{C\times L}$ and computes a softmax in the pixel dimension
\begin{equation*}
    \mathbf{A} = \mathrm{Softmax}_{HW}(\mathbf{X}\mathbf{W})
\end{equation*}
where $\mathbf{A} \in \mathbb{R}^{HW\times L}$ is referred to as an attention map. Visual tokens $\mathbf{T}$ are computed via $\mathbf{T} = \mathbf{A}^T\mathbf{X}$ which is the weighted average of pixels in the original feature map $\mathbf{X}$. 
Intuitively, the tokens $\mathbf{T}$ capture semantics such as relative boundary and corner locations and from this the transformer computes the global characteristic of the antenna configuration.  Please see Figure~\ref{attn_maps} in Appendix~\ref{qualitative} for a subset of the learned attention maps for a specific antenna instance which demonstrate this.

After that, $\mathbf{T}$ is then passed through a multi-layer transformer encoder \cite{Vaswani2017}, flattened and passed through a fully connected layer and a non-linearity. From this representation, three separate complex-valued fully connected layers predict the constant, zeros and poles. Concretely, let $C_\theta,\  Z_\theta,\ P_\theta$ be linear layers parameterized by $\theta$. Then,
\begin{align*}
    \vv = \mathrm{FC}(\mathrm{Transformer}(\mathbf{T}))\\
    c_0(\vh) := C_\theta(\vv),\ \mathbf{z}(\vh) := Z_\theta(\vv),\ \mathbf{p}(\vh) := P_\theta(\vv)
\end{align*}
where $\vh$ is the design choice and $c_0(\vh),\ \mathbf{z}(\vh),\ \mathbf{p}(\vh)$ are the constant and vectors (of length $K$) of zeros and poles, respectively, used to compute the frequency response as per Equation~\ref{czp_freq_response}. We refer to this architecture which outputs the constant, zeros and poles as \czp{} models / architectures, or just \czp{} as abbreviation.

\subsection{Model training}
When training \czp{} models, we do not have direct supervision to $c_0(\vh)$, zeros $\vz(\vh)$, and poles $\vp(\vh)$, but only the $S_{11}(\omega)$ provided with CST Microwave Studio~\cite{CST} as ground truth. Therefore, we leverage Eqn.~\ref{czp_freq_response} to compute estimated $S_{11}(\omega)$ with $c_0$, $\vz$, and $\vp$, so that it can match the ground truth. We then train the model via back-propagation in an end-to-end manner to minimize the Mean Squared Error (MSE), for frequencies in the range $[0.2-7.0]$GHz at increments of $0.1$ (i.e., 69 dimensions). We use a shrinkage loss~\cite{Lu2018} variant of MSE as we found that with vanilla MSE, the model had higher error on the crucial parts of the \soo{} (i.e., the resonances). 

\section{Experiments}
In this section, we demonstrate the impact of our architectural choices and image representation on the five patch example antenna discussed in Section~\ref{sec:parameterization-antenna}. Specifically, we show that:
\begin{itemize}
    \item Our proposed image representation is a significant improvement over reasonable coordinate-based inputs as well as a naive binary image input.
    \item The \czp{} formulation outperforms raw prediction when using the same transformer architecture proposed in Section~\ref{method} and the proposed image representation.
    \item The transformer architecture outperforms a CNN for the image representation and an MLP for coordinate based input.
    \item \czp{} generalizes well to unseen antenna designs, not only on a held-out dataset, but also as a surrogate model for designs proposed by the reinforcement learning (RL) based search procedure, as verified to meet specific resonance requirements by commercial softwares.  
\end{itemize}

\newcommand{\seq}{\,{=}\,}

\subsection{Surrogate Modeling}
We use $48K$ total samples, uniformly sampled and simulated with CST Microwave Studio \cite{CST} where each sample takes between $90$ and $120$ seconds to simulate. $90\%$ of samples are used for training and $10\%$ are used for testing. From the training set, $10\%$ are randomly sampled and used for validation. Each experiment is run for 3 random seeds. Appendix~\ref{hyperparams} provides all experimental hyperparameters.

Figure~\ref{raw_vs_czp} illustrates the first set of experiments in which we demonstrate the effectiveness of (1) our novel image representation and (2) the proposed \czp{} when using the transformer architecture. To show (1), we compare against a coordinate-based method which concatenates the normalized bottom-left $x,y$ coordinate of each patch with a one-hot vector to distinguish between patches. When using the transformer architecture, this generates $5$ tokens, each with dimension $7$. Before being processed by the transformer, each token is projected into a $256$ dimensional vector by a $2$-layer MLP with hidden layer of width $256$.  Additionally, to demonstrate (2), for both coordinate and image input, we compare against directly predicting the raw $69$ dimensional frequency response with a fully connected layer, referred to as Raw in figures. Additionally, we ablate over different degree $K$ of \czp{} with values $8$, $12$, $16$, $20$ and the number of attention layers $L$ with values $8$, $6$, $4$, $2$.

First, within each configuration, images improve over its coordinate counterpart by a minimum of $9.6\%$ with $L\seq4$ and raw prediction and a maximum of $26.8\%$ with $L\seq2$ and $K\seq12$. Second, for the image representation, \czp{} improves over raw prediction by a minimum of $9.2\%$ with $L\seq8$ and $K\seq12$ and a maximum of $28.0\%$ with $L\seq2$ and $K\seq12$. For the coordinate representation, \czp{} improves over raw prediction by a minimum of $4.6\%$ with $L\seq8$ and $K=12$ and a maximum of $25.1\%$ with $L=2$ and $K\seq8$. Finally, increasing the transformer depth from 2 to 8 layers improves raw prediction for image and coordinate representations by $29.5\%$ and $35.9\%$, respectively. Increased depth improves the \czp{} an average of $15.4\%$ and $20.9\%$ for image and coordinate representations, respectively.    

From these statistics, we can extract the following insights which support the \czp{} architecture and image representation as powerful inductive biases; (1) With fewer transformer layers, \czp{} yields greater improvement over raw prediction and (2) \czp{} and image representation benefit {\it the least} from increasing the complexity of the model and, at the opposite extreme, raw prediction and coordinate representation benefit the most. These two points show that without these inductive biases, deeper models are required as shallower models are likely to fall into local minima.

\begin{figure*}[ht]
\centering
  \begin{tabular}{| c | c | c | c | c | c | c |}
    \hline
        Layers&Input Type&Raw & \czp{} $K\seq8$ & \czp{} $K\seq12$ & \czp{} $K\seq16$ & \czp{} $K\seq20$\\\hline
        
        \multirow{2}{*}{$L\seq8$}& Image &$.00284 \pm 7\mathrm{e}{-5}$&$.00243 \pm 1\mathrm{e}{-4}$&$.00258 \pm 3\mathrm{e}{-5}$&$.00253 \pm 5\mathrm{e}{-5}$& $\mathbf{.00234 \pm 2\mathrm{e}{-5}}$\\ \cline{2-7}
        
        &Coord & $.00327 \pm 5\mathrm{e}{-5}$ & $.003\pm 8\mathrm{e}{-5}$& $.00312\pm 4\mathrm{e}{-5}$& $.00307 \pm 4\mathrm{e}{-5}$ & $.00303 \pm 5\mathrm{e}{-5}$ \\
       \hline
        \multirow{2}{*}{$L\seq6$}& Image &$.00312 \pm 9\mathrm{e}{-5}$&$.00254 \pm 3\mathrm{e}{-5}$&$.00253 \pm 9\mathrm{e}{-5}$&$.00255 \pm 6\mathrm{e}{-5}$& $\mathbf{.00249 \pm 8\mathrm{e}{-5}}$\\ \cline{2-7}
        
        &Coord & $.00357 \pm 8\mathrm{e}{-5}$ & $.00308\pm 9\mathrm{e}{-5}$& $.00309\pm7\mathrm{e}{-5}$& $.00313 \pm 8\mathrm{e}{-5}$ & $.00303 \pm 7\mathrm{e}{-5}$ \\
       \hline
       \multirow{2}{*}{$L\seq4$}& Image & $.00348 \pm 1\mathrm{e}{-4}$&$.00268 \pm 7\mathrm{e}{-5}$&$.00266 \pm 2\mathrm{e}{-4}$&$\mathbf{.00251 \pm 5\mathrm{e}{-5}}$&$.00252 \pm 1\mathrm{e}{-4}$ \\ \cline{2-7}
        
        &Coord & $.00385 \pm 5\mathrm{e}{-5}$ & $.00317\pm6\mathrm{e}{-5}$& $.00328\pm8\mathrm{e}{-5}$& $.0032 \pm 1\mathrm{e}{-4}$ & $.00322 \pm 2\mathrm{e}{-5}$ \\
       \hline
       \multirow{2}{*}{$L\seq2$}& Image &$.00403 \pm 4\mathrm{e}{-5}$&$.00292 \pm 1\mathrm{e}{-4}$&$\mathbf{.0029 \pm 1\mathrm{e}{-4}}$&$.00294 \pm 1\mathrm{e}{-4}$& $.00292 \pm 2\mathrm{e}{-4}$\\ \cline{2-7}
        
        &Coord & $.0051 \pm 8\mathrm{e}{-5}$ & $.00382\pm2\mathrm{e}{-4}$& $.00396\pm1\mathrm{e}{-4}$& $.00385 \pm 5\mathrm{e}{-5}$ & $.00384 \pm 9\mathrm{e}{-5}$ \\
       \hline
  \end{tabular}
\caption{Mean and standard deviation of the test loss over 3 seeds with the transformer architecture for {\it image} and {\it coordinate} input representations and $L\seq8,6,4,2$ attention layers. Results are reported for raw frequency prediction and the \czp{} architecture with degree $K\seq8,12,16,20$. In all configurations, the image representation outperforms coordinates and \czp{} outperforms raw prediction.}\label{raw_vs_czp}
\end{figure*}

In Figure~\ref{naive_baselines}, we provide results for $4$ other baselines to show the impact of the transformer and image representation over reasonable alternatives e.g., a fully connected MLP with coordinate input, a CNN with our image input, the transformer with a naive single-channel binary image input, and the Fourier Neural Operator (FNO)~\cite{li2020fourier} developed to solve other PDEs such as Navier-Stokes with image input. In the last row we reproduce the results of the 8-layer transformer with image input from Figure~\ref{raw_vs_czp}.  The 8 layer transformer is a $40\%+$ improvement on these baselines.

\begin{figure*}[ht]
\centering
  \begin{tabular}{| c | c | c | c | c | c |}
    \hline
    Arch + Input & Raw & \czp{} $K\seq8$ & \czp{} $K\seq12$ & \czp{} $K\seq16$ & \czp{} $K\seq20$\\ \hline
    MLP + Coord & $.00492 \pm 5\mathrm{e}{-5}$& $.00502 \pm 1\mathrm{e}{-4}$& $.00553 \pm 3\mathrm{e}{-4}$&$0.00507 \pm 3\mathrm{e}{-4}$& failed \\
    \hline
    CNN + Image &$.0054 \pm 3\mathrm{e}{-5}$&$.00496 \pm 1\mathrm{e}{-3}$&$.00405 \pm 1\mathrm{e}{-4}$& $.00424 \pm 1\mathrm{e}{-4}$& failed\\
    \hline
    Transformer +&\multirow{2}{*}{$.0049 \pm 1\mathrm{e}{-4}$}&\multirow{2}{*}{$.005 \pm 2\mathrm{e}{-4}$}& \multirow{2}{*}{$.00488 \pm 9\mathrm{e}{-5}$}&\multirow{2}{*}{$.00501 \pm 1\mathrm{e}{-4}$} & \multirow{2}{*}{$.0049 \pm 8\mathrm{e}{-5}$}\\ 
    Binary Image &&&&&\\
    \hline
     FNO + &\multirow{2}{*}{$.00724 \pm 5\mathrm{e}{-5}$}& \multirow{2}{*}{$.00715 \pm 1\mathrm{e}{-4}$}&\multirow{2}{*}{$.00706 \pm 5\mathrm{e}{-5}$} & \multirow{2}{*}{$0.0073 \pm 2\mathrm{-4}$} & \multirow{2}{*}{$.00724 \pm 9\mathrm{-5}$}\\ 
    Image &&&&&\\
    \hline
     {\bf Transformer +}&\multirow{2}{*}{$.00284 \pm 7\mathrm{e}{-5}$}&\multirow{2}{*}{$.00243 \pm 1\mathrm{e}{-4}$}&\multirow{2}{*}{$.00258 \pm 3\mathrm{e}{-5}$}&\multirow{2}{*}{$.00253 \pm 5\mathrm{e}{-5}$}& \multirow{2}{*}{$.00234 \pm 2\mathrm{e}{-5}$}\\
     {\bf Image} &&&&&\\
    \hline
     
  \end{tabular}
\caption{Mean and standard deviation of the test loss over 3 seeds for ablations of architectural components of the proposed model and baselines. Results reported are for raw prediction and \czp{} with degree $K\seq8,12,16,20$ for the following configurations: 6-layer MLP with coordinate input, 5 layer CNN with image input, and 8 layer transformer with binary image input and 4-layer FNO with image input.}\label{naive_baselines}
\vspace{-.25cm}
\end{figure*}

\begin{figure}
\centering
  \begin{tabular}{| c | c | c |}
    \hline
    $\%$ Training &\multicolumn{2}{c |}{Transformer Out} \\ 
    Data &  Raw & \czp{} $K\seq20$ \\ \hline
    $25\%$ &$.00621 \pm 3\mathrm{e}{-4}$& $.00574 \pm 1\mathrm{e}{-4}$\\ \hline
    $50\%$ &$.00387 \pm 2\mathrm{e}{-4}$& $.0038 \pm 2\mathrm{e}{-4}$\\ \hline
    $75\%$ &$.00329 \pm 1\mathrm{e}{-4}$& $.00296 \pm 2\mathrm{e}{-4}$\\ \hline
    $100\%$ &$.00284 \pm 7\mathrm{e}{-5}$& $.00234 \pm 2\mathrm{e}{-5}$\\ \hline
  \end{tabular}
\caption{Mean and standard deviation of the test loss over 3 seeds for \czp{} $K\seq20$ and raw prediction with the 8-layer transformer architecture and image input with randomly sampled subsets of the training data for portions $25\%$, $50\%$ and $75\%$. \czp{} has a lower test loss.}\label{data_ablation_mse}
\vspace{-.5cm}
\end{figure}
Finally, in Figure~\ref{data_ablation_mse}, we provide a data ablation with raw prediction and \czp{} $K\seq20$. The trend of \czp{} out performing raw prediction holds in this setting as well although the differences in test loss are small. However, in the next section, we show that our model greatly outperforms the baselines when used for optimization when trained with less data, more robust for unseen designs. For other qualitative results such as attention map visualizations, please see Appendix~\ref{qualitative}.

\subsection{Optimization}
In this section, we demonstrate the utility of the proposed model by showing it can be used by an optimization procedure to find antenna configurations that have specific resonance characteristics. This is a significant test of the generalization and robustness of the model since (1) an antenna with the desired resonances is {\it not} contained in the training set and (2) an optimization procedure can very easily find adversarial configurations to exploit the weaknesses of the surrogate model~\cite{Yuan2017}. We hypothesize that \czp{} will be far more robust than raw prediction to these kinds of samples because it is by design smooth (i.e., a ratio of two polynomials) whereas raw prediction has no built in bias encouraging this property. Please see Figure~\ref{smooth_vs_nonsmooth} in Appendix~\ref{qualitative} for qualitative intuition regarding this. In this section, we provide results which demonstrate that our proposed model, when used by an optimization procedure, has a significantly higher success rate and is more robust to dataset size than the baseline.

\begin{figure*}
\begin{tabular}{c c}
\includegraphics[width=.5\textwidth]{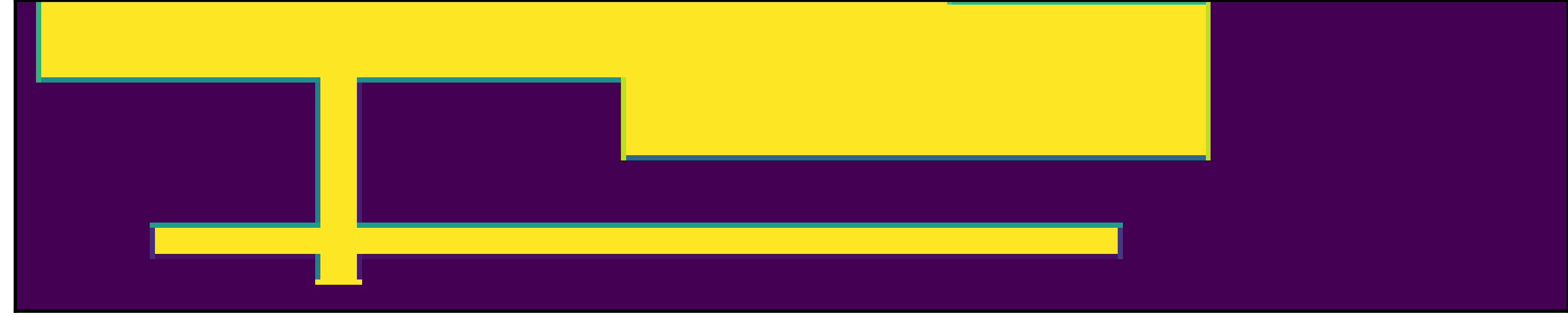} & \includegraphics[width=.5\textwidth]{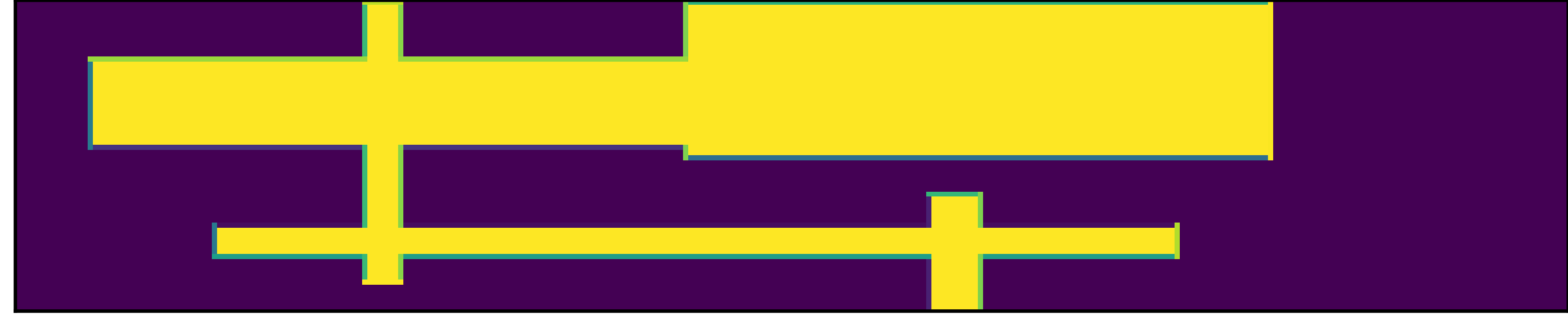} \\
\includegraphics[width=.5\textwidth]{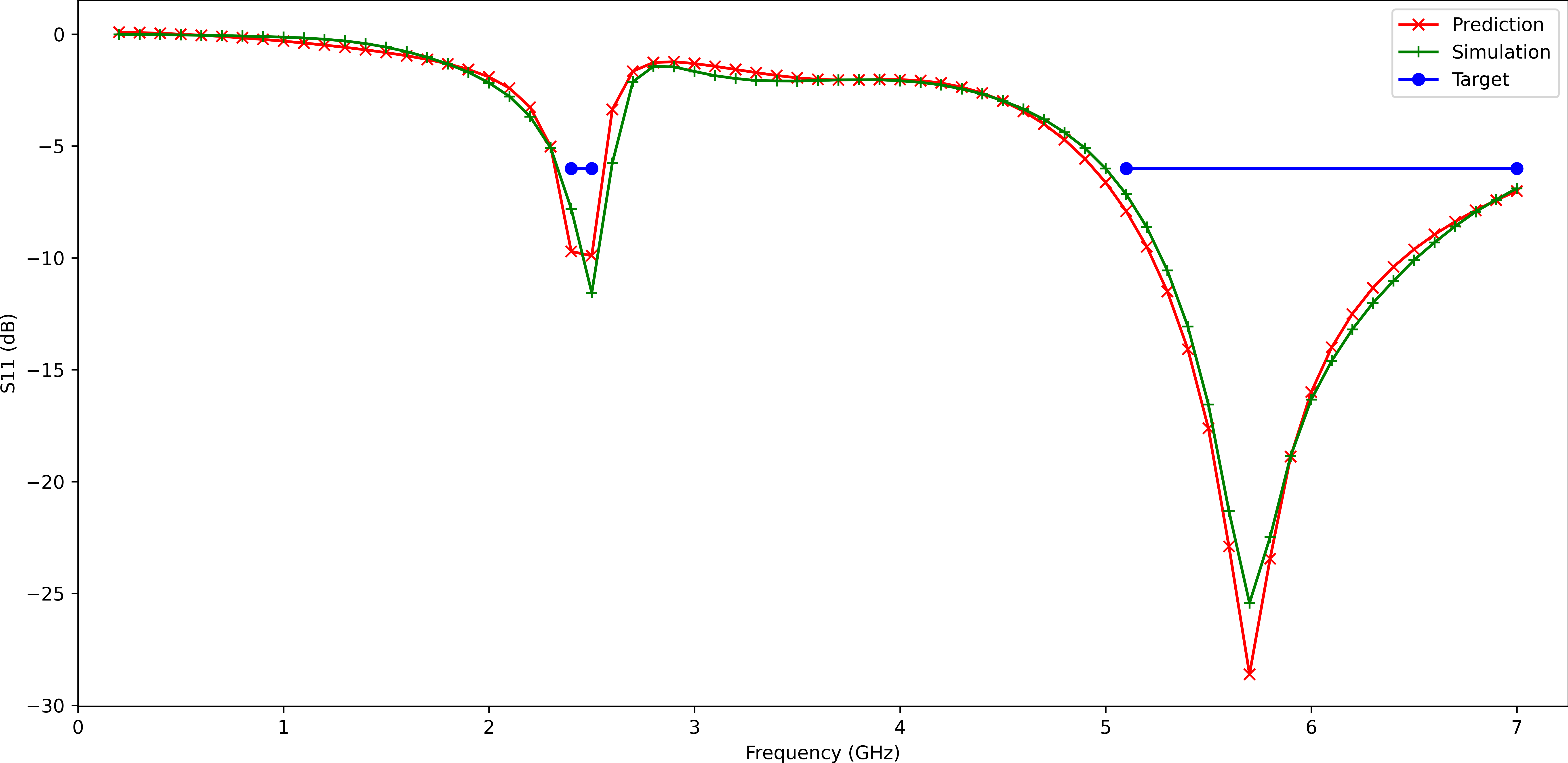} & \includegraphics[width=.5\textwidth]{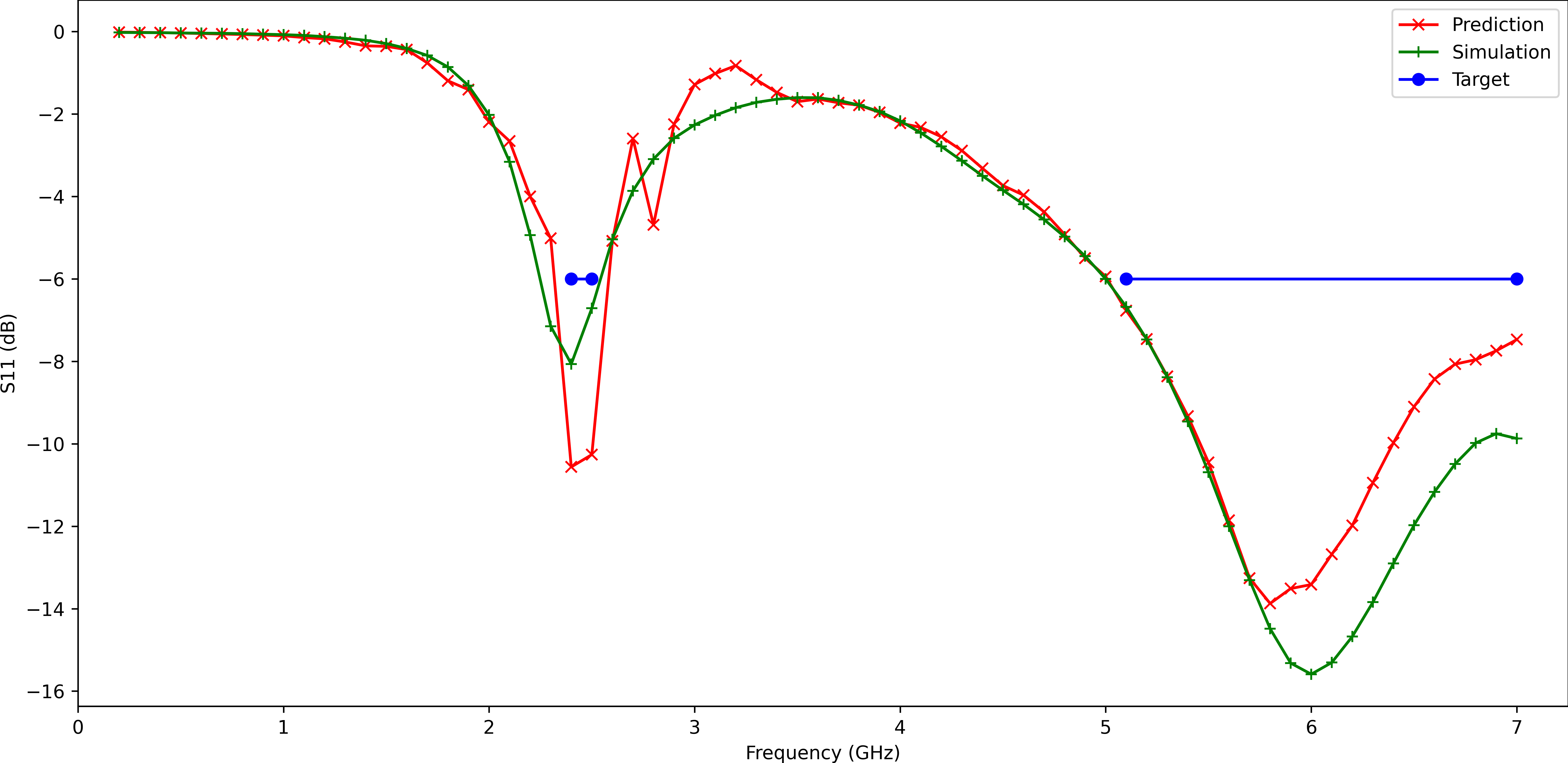} \\
{\bf (a)} \czp & {\bf (b)} Raw \\
\end{tabular}
\vspace{-.25cm}
\caption{Two successful antenna configurations (top row) and corresponding frequency responses (bottom row) predicted by the model (red) and computed by CST (green) found by optimization via RL for {\bf (a)} \czp{} and {\bf (b)} Raw.}\label{opt_examples}
\vspace{-.25cm}
\end{figure*}
\begin{figure}
\centering
  \begin{tabular}{| c | c | c | c| c|}
    \hline
    \multirow{3}{*}{$\%$ Data} &\multicolumn{4}{c |}{Transformer Out} \\ 
    &  \multicolumn{2}{c|}{Raw} & \multicolumn{2}{c|}{\czp{} $K\seq20$} \\ 
    &  $\%$ Top 3 & Any Top 3 & $\%$ Top 3 & Any Top 3 \\ \hline
    $25\%$ &$11.1\%$ & $22.2\%$&$33.3\%$&$55.6\%$\\ \hline
    $50\%$ &$14.8\%$&$22.2\%$&$40.7\%$&$66.7\%$\\ \hline
    $75\%$ &$33.3\%$&$55.6\%$&$55.6\%$&$77.8\%$\\ \hline
    $100\%$ &$51.9\%$&$66.7\%$&$88.9\%$&$100\%$\\ \hline
  \end{tabular}
\caption{Success rate for the $\%$ of top 3 configurations and if any of the top 3 configurations meet design requirements. Experiments performed for 3 seeds for RL for each of the 3 seeds of \czp{} $K\seq20$ and raw prediction with the 8-layer transformer architecture and image input from the previous section. Experiments conducted also with randomly sampled subsets of the training data for portions $25\%$, $50\%$ and $75\%$. \czp{} is more robust than raw prediction to the optimization procedure and to dataset size.}\label{data_ablation_rate}
\vspace{-.5cm}
\end{figure}
We frame antenna design as a reinforcement learning (RL)~\cite{sutton} problem where an agent is tasked with sequentially placing each of the 5 patches such that the frequency response of the final antenna meets the resonance characteristics. Recall from Section~\ref{prelims}, this means that the corresponding $S_{11}$ is below a certain threshold at specific frequency ranges. In this problem, the frequency ranges are $2.4$ GHz-$2.5$ GHz and $5.1$ GHz-$7.0$GHz and the target thresholds are $t_{[2.4-2.5]} = -6.0$ dB and $t_{[5.1-7.0]} = -6.0$ dB, the spectrum for WiFi 6E.

Formally, we define the state and action of the Markov Decision Process (MDP)~\cite{puterman1994markov} as: 
\begin{itemize}
\item {\bf State:} A one-hot identifier and $(x, y)$ coordinates of the bottom left corner of the patches which have been placed and a one-hot vector for the next patch to be placed.
\item {\bf Action:} $(x, y)$ coordinates of the bottom-left corner of the next patch to be placed.
\end{itemize}

After all patches have been placed, the coordinates are converted to the image representation and the surrogate model predicts the frequency response $\log|S_{11}(\omega)|$. From the final $\log|S_{11}(\omega)|$, we compute the following reward components for each resonance target.
\begin{align}
r_{[2.4-2.5]} = \min(t_{[2.4-2.5]} - \log|S_{11}(\omega)|_{[2.4-2.5]})\\
r_{[5.1-7.0]} = \min(t_{[5.1-7.0]} - \log|S_{11}(\omega)|_{[5.1-7.0]})
\end{align}
where the subscripts correspond to list slicing. The sum $r = r_{[2.4-2.5]} + \min(1.0, r_{[5.1-7.0]})$ is then the reward given at the final timestep and at all previous timesteps the reward is zero. Note, we prevent the second reward component from being greater than $1.0$ because in experiments the higher band ($5.1$ GHz-$7.0$ GHz) seemed to be easier to optimize and often led to local minima that did not optimize the lower band ($2.4$ GHz-$2.5$ GHz). To optimize, we use the implementation of Soft Actor Critic (SAC)~\cite{SAC} from Stable-Baselines3~\cite{stable-baselines3} and build the environment using the Gym API~\cite{gym}. Default hyperparameters are used except we perform two updates at the end of each episode as opposed to one or more updates per step. 

For these experiments, we use the \czp{} $K\seq20$ and raw prediction architectures with an 8-layer transformer as these achieved the lowest test losses. For each of the 3 seeds for each architecture trained in the previous section, we run 3 seeds of optimization for a total of 9 experiments per configuration. In each experiment, we deploy the SAC agent for $25$K total episodes or $125$K total timesteps (since the agent places 1 of 5 patches each step). We also investigate the robustness of this process to dataset size which is critical in antenna design as sample collection is expensive.

Generally, SAC is able to find configurations which meet the requirements with both architectures and in Figure~\ref{opt_examples} we provide examples. The top row provides the found antenna configuration and the bottom row the frequency responses predicted by the model (red) and CST (green).

However, in terms of {\it success rate} (i.e., how many configurations or optimization runs actually produce an antenna which meets the constraints), \czp{} significantly out performs raw prediction. Specifically, in Figure~\ref{data_ablation_rate}, we provide the percentage of the top 3 configurations (i.e., 3 per seed for a total of 27) found over all seeds which meet the constraints and also the percentage of runs where {\it any} of the top 3 meet the constraints. Additionally, we perform a data ablation to show that our \czp{} model is more robust to less data demonstrating its strength as an inductive bias.

\section{Conclusion}
In this work, we theoretically derived a novel parametric form physical quantities must obey in linear PDEs in terms of complex-valued zeros and poles. Based on this, we proposed the the \czp{} framework, which uses a neural network to predict these zeros and poles. Applying this to the problem of industrial antenna design, we show that an antenna's frequency response can be predicted by our \czp{} model and propose an efficient novel image representation of an antenna from which to do so. 
We then demonstrated experimentally that \czp{} and the image representation are significant advances through architecture and data ablation studies. Finally, we showed that our \czp{} model has significantly higher utility in terms of success rate for optimization of antenna design than baselines.

Although the results are significant, the problem investigated in this work is still relatively simple compared to production level antenna systems. Future work will involve solving more complicated 2D problems as well as generalizing to 3D antenna. Additionally, in this line of work, we plan to explore other tokenization schemes that are as information rich as images but are more computationally efficient since images require convolutions to featurize. Lastly, future work will involve more applications involving  frequency responses via solving linear PDEs.

\bibliography{antenna_ai}
\bibliographystyle{icml2023}
\clearpage

\appendix

\section{Derivations}
\label{sec:proof}
\formulalinearpde*
\begin{proof}
Consider the following high-dimensional linear ODE problem of $N$ variables:
\begin{equation}
\dot\vphi = A\vphi \label{eq:dynamics-phi}
\end{equation}
Here $\vphi$ is a $N$-dimensional vector and we refer its component at spatial location $\vx$ as $\phi(\vx)$ (i.e. a scalar), and $A$ is an $N$-by-$N$ diagonalizable matrix. $A = A(\vh)$ depends on material and topological properties, i.e., the design choice $\vh$, is time-invariant and is not necessarily symmetric. Therefore, $A$ has the following decomposition $A = U\Lambda U^{-1}$, where each column of $U$ is its eigenvector and $\Lambda = \diag(\lambda_1, \ldots, \lambda_N)$ is a diagonal matrix containing all its  eigenvalues $\{\lambda_i\}_{i=1}^N$. Note that entries of both $U$ and $\Lambda$ can be complex numbers. Since the ODE is stable and all excitation eventually vanishes, the real part of all eigenvalues are negative: $\Re[\lambda_i] < 0$. 

By theory of linear ODE, we know that the solution to Eqn.~\ref{eq:dynamics-phi} has analytic form:
\begin{equation}
\vphi(t) = e^{At} \vphi(0)
\end{equation}
where $\vphi(0)$ is the initial condition of $\vphi$. Then we can compute its (single-sided) Fourier transform $\hat\vphi(\omega) := \int_0^{+\infty} \vphi(t) e^{-\di\omega t} \dd t$:
\begin{eqnarray}
    \hat\vphi(\omega) &=& \left(\int_0^{+\infty} e^{At} e^{-\di\omega t} \dd t\right) \vphi(0) \nonumber \\
    &=& U \left(\int_0^{+\infty} e^{\Lambda t} e^{-\di\omega t} \dd t\right) U^{-1} \vphi(0) \nonumber \\
    &=& U \diag\left(\int_0^{+\infty} e^{\lambda_i t} e^{-\di\omega t} \dd t\right) U^{-1} \vphi(0)  \nonumber \\
    &=& U \diag\left(\frac{1}{\di\omega-\lambda_i}\right) U^{-1} \vphi(0) 
\end{eqnarray}
Note that $U$ and $\vphi(0)$ are all time-invariant. 

For double-sided Fourier transform $\hat\vphi(\omega) := \int_{-\infty}^{+\infty} \vphi(t) e^{-\di\omega t} \dd t$ and symmetric signal extension $\vphi(-t) = \vphi(t) = e^{A|t|}\vphi(0)$ to negative side, similar we can compute 
\begin{eqnarray}
    \int_{-\infty}^{+\infty} e^{\lambda_i |t|} e^{-\di\omega t} \dd t &=& \frac{1}{\di\omega-\lambda_i} + \frac{1}{-\di\omega-\lambda_i} \\
    &=& -\frac{2\lambda_i}{\lambda^2_i + \omega^2}
\end{eqnarray}
Note that while it looks like a real number, the result is still complex since the eigenvalue $\lambda_i$ is in general a complex number for an asymmetric dynamical system $A$. In this case, we can write $\hat\vphi(\omega)$ as:
\begin{equation*}
    \hat\vphi(\omega) = -U \diag\left[\frac{2\lambda_1}{\lambda_1^2 + \omega^2}, \ldots, \frac{2\lambda_N}{\lambda_N^2 + \omega^2}\right] U^{-1} \vphi(0)
\end{equation*}

In both cases, each component of $\hat\vphi(\omega)$ is a rational function of complex polynomial with respect to frequency $\omega$ and so does any of its linear combinations $\vb^\top\hat\vphi(\omega)$. 
\end{proof}

\czpformulalinearpde*
\begin{proof}
According to Theorem~\ref{thm:formlalinearpde}, both the linear combinations $\vb_1^\top\hat\vphi(\omega)$ and $\vb_2^\top\hat\vphi(\omega)$ are in the form of rational function of complex polynomials, therefore, their ratio is also a rational function of complex polynomials. According to the fundamental theorem of algebra, any polynomial of order $K$ can be written as a product of order-1 factors $\omega-\omega_k$ and a constant, where $\{\omega_k\}$ are the (complex) roots of the $K$-th order polynomials. Applying it to both the nominator and the denominaotr of the rational function and we reach the conclusion.
\end{proof}

\czpstruct*
\begin{proof}
Since all voltages and currents in the antenna are linear functions of EM quantities represented in the components of $\vphi$, by Theorem~\ref{thm:formlalinearpde}, they are also rational function of complex polynomials. As a result, the input impedance $Z_{\mathrm{in}}(\omega)$ of the antenna, defined as the ratio between the voltage and the current, is also a rational function of complex polynomials, represented as a quotient of two complex polynomials $Q_1(\omega)$ and $Q_2(\omega)$:
\begin{equation}
    Z_{\mathrm{in}}(\omega) = \frac{Q_1(\omega)}{Q_2(\omega)}
\end{equation}
Note that this holds regardless of where the voltage and the currents are defined.  

Therefore, the scattering coefficient $S_{11}(\omega)$ has the following structure:
\begin{equation}
    S_{11}(\omega) := \frac{Z_{\mathrm{in}}(\omega)/Z_0 - 1}{Z_{\mathrm{in}}(\omega)/Z_0 + 1} = \frac{Q_1(\omega) - Z_0 Q_2(\omega)}{Q_1(\omega)+Z_0 Q_2(\omega)}
\end{equation}
Therefore, it can be represented as a ratio of two complex polynomials of the \emph{same} degrees (called it $K$). 

By the fundamental theorem of algebra, the log spectrum of $S_{11}(\omega)$ is:
\begin{equation}
\label{eq:czp_freq_response_appendix}
\log|S_{11}(\omega)| = \log|c_0(\vh)| + \sum_{k=1}^K \log\frac{|\omega - z_k(\vh)|}{|\omega - p_k(\vh)|}
\end{equation}
where the constant $c_0(\vh)$, zeros $\{z_k(\vh)\}_{k=1}^K$ and poles $\{p_k(\vh)\}_{k=1}^K$ are all functions of $A=A(\vh)$ and thus the design choice $\vh$.
\end{proof}
\textbf{Remark}. Note that it is possible that the polynomial $Q_1(\omega) - Z_0 Q_2(\omega)$ may not have the same order as the polynomial $Q_1(\omega) + Z_0 Q_2(\omega)$ (i.e., one of them has their leading term precisely cancelled out, while the other does not). While this is a rare situation, when it happens, Eqn.~\ref{eq:czp_freq_response_appendix} still applies, by having one or more zeros/poles moving far away from the concerned frequency region of $\omega$. Then the corresponding factor $|\omega-z_k|$ (or $|\omega-p_k|$) almost never changes, and can be absorbed into the constant term $c_0$. 

\section{Specification of the Design Space}\label{antenna_specs}
The dimensions $s_k = (s_{k,x}, s_{k,y})$ and ranges for the location $l_{k,x}$, $l_{k,y}$ of each of the 5 patches $p_k$ are \\

$p_1$: $s_{1}=(0.75, 5.49)$, $l_{1,x} \in [0,10]$, $l_{1,y} \in [0.5,0.5]$\\
$p_2$: $s_{2}=(17.64,1.7)$, $l_{2,x} \in [0,12.36]$, $l_{2,y} \in [1,4.7]$\\
$p_3$: $s_{3}=(11.38,3.0)$, $l_{3,x} \in [10,18.62]$, $l_{3,y} \in [1,3]$\\
$p_4$: $s_{4}=(18.63,0.56)$, $l_{4,x} \in [0,11.37]$, $l_{4,y} \in [1,5.44]$\\
$p_5$: $s_{5}=(0.99,2.43)$, $l_{5,x} \in [10,29.01]$, $l_{5,y} \in [-2,3.57]$

where all values are in mm. These values were determined from antennas that have been used for past production devices in industry. Additionally, patch $p_1$ determines the location of the discrete port.

\begin{algorithm}
\caption{\textsc{Add\_Patch\_To\_Image($x_{bl}$, $y_{bl}$, $x_{tr}$, $y_{tr}$, $image$)}}
\label{imgen}
	{\bf Input:} $x_{bl}$, $y_{bl}$, $x_{tr}$, $y_{tr}$: floating point bottom-left and top-right coordinates of rectangular patch, $image$: Image array
	\begin{algorithmic}[1]
	    \STATE // Lower bound on coordinates to index image
	    \STATE $\bar{x}_{bl} = \lfloor x_{bl}\rfloor$, $\bar{y}_{bl} = \lfloor y_{bl}\rfloor$, $\bar{x}_{tr} = \lfloor x_{tr}\rfloor$, $\bar{y}_{tr} = \lfloor y_{tr}\rfloor$
	    \STATE // Write X boundaries
	    \STATE $image_{x\_bound}[\bar{y}_{bl}:\bar{y}_{tr}][\bar{x}_{bl}] = 1 - (x_{bl} - \bar{x}_{bl})$
	    \STATE $image_{x\_bound}[\bar{y}_{bl}:\bar{y}_{tr}][\bar{x}_{tr}] = x_{tr} - \bar{x}_{tr}$
	    \STATE // Write Y boundaries
	    \STATE $image_{y\_bound}[\bar{y}_{bl}][\bar{x}_{bl}:\bar{x}_{tr}] = 1 - (y_{bl} - \bar{y}_{bl})$
	    \STATE $image_{y\_bound}[\bar{y}_{tr}][\bar{x}_{tr}:\bar{x}_{tr}] = y_{tr} - \bar{y}_{tr}$
	    \STATE // Write interior
	    \STATE $image_{interior}[\bar{y}_{bl} + 1 : \bar{y}_{tr} - 1][\bar{x}_{tr} + 1 :\bar{x}_{tr} - 1] = 1.0$
	\end{algorithmic}
\end{algorithm}

\section{Experimental details}\label{hyperparams}
\begin{table}[h]
\begin{center}
  \begin{tabular}{ c | c }
    Hyperparameter & Value \\ 
    \hline
    Batch Size & 100\\
    Learning Rate & .0005 \\
    Activation & Swish\\
    Warmup epochs & 100 \\
    Decay LR plateau epochs & 20 \\
    Decay LR plateau factor & .5 \\
    Total Epochs & 500 \\
    Attention heads & 8\\
    Attention layers & [2,4,6,8]\\
    \czp{} Degree & [8,12,16,20]
  \end{tabular}
\end{center}
\caption{General experimental hyperparameters}
\end{table}
\begin{table}[h]
\begin{center}
  \begin{tabular}{ c | c }
    Hyperparameter & Value \\ 
    \hline
    Spatial Attention Maps & 16\\
    Conv KernelxStridexPad & 5x1x2\\
    Conv Layers & 2\\
    Conv Filters & 128\\
    Attn dim\_feedforward & 256
  \end{tabular}
\end{center}
\caption{Image input specific hyperparameters for the transformer with spatial attention}
\end{table}

\begin{table}[h]
\begin{center}
  \begin{tabular}{ c | c }
    Hyperparameter & Value \\ 
    \hline
    FC embed dimension & 256\\
    FC layers & 2\\
    Attn dim\_feedforward & 512
  \end{tabular}
\end{center}
\caption{Coordinate input specific hyperparameters for the transformer with coordinate input}
\end{table}


\section{Other Visualizations}\label{qualitative}
\begin{figure*}
\begin{tabular}{c c}
\includegraphics[width=.48\textwidth]{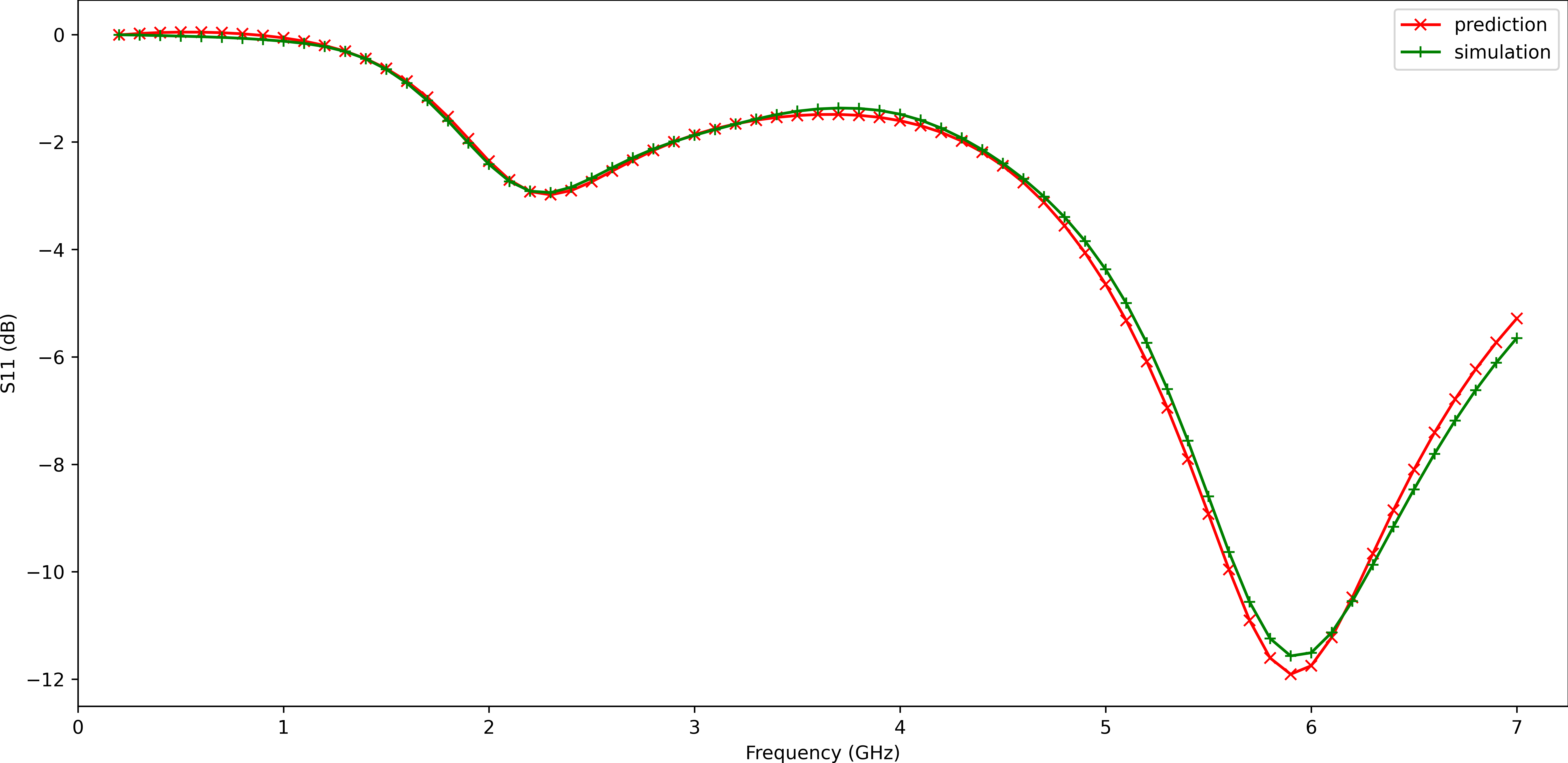}&\includegraphics[width=.48\textwidth]{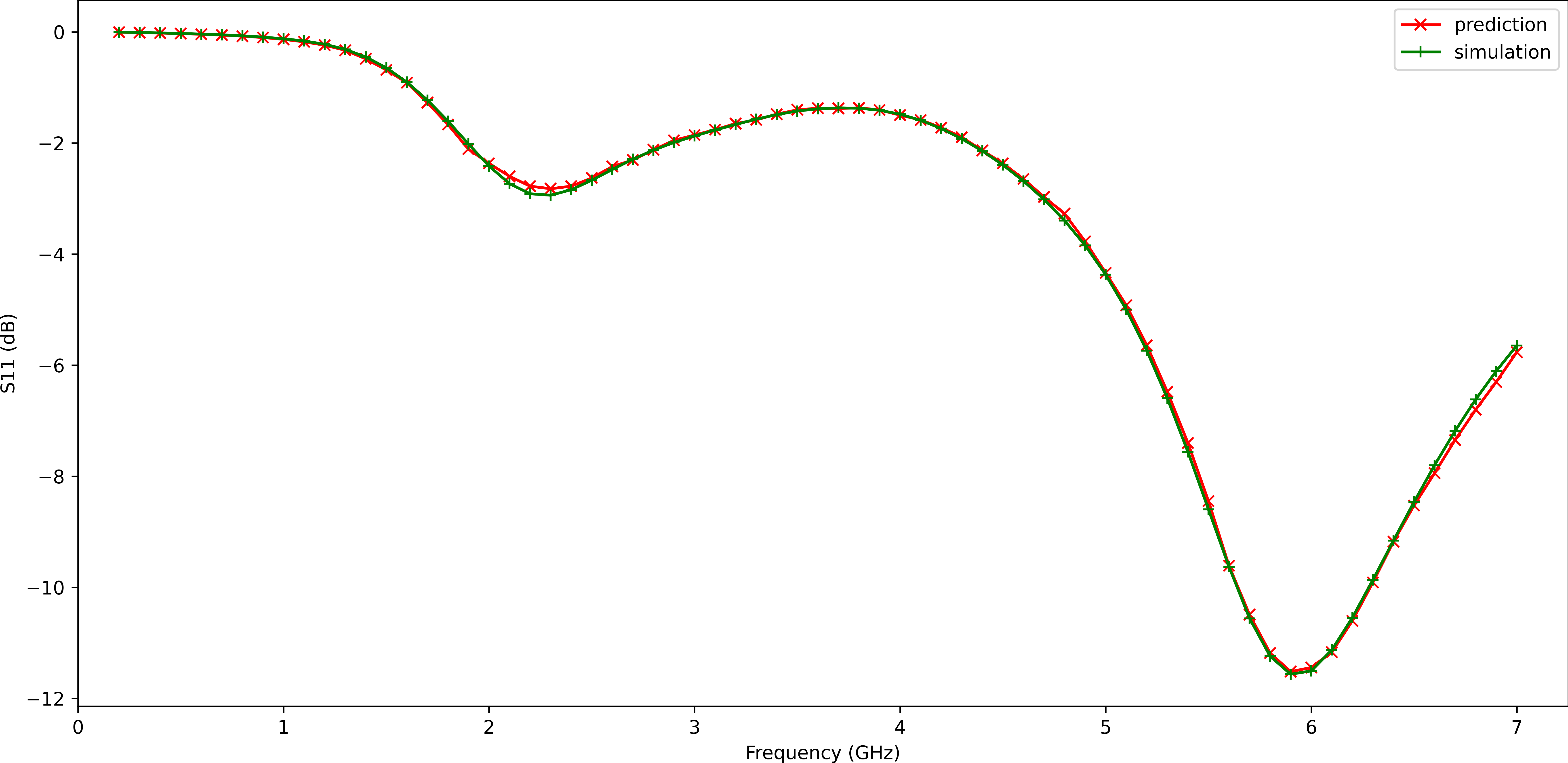}\\
\includegraphics[width=.48\textwidth]{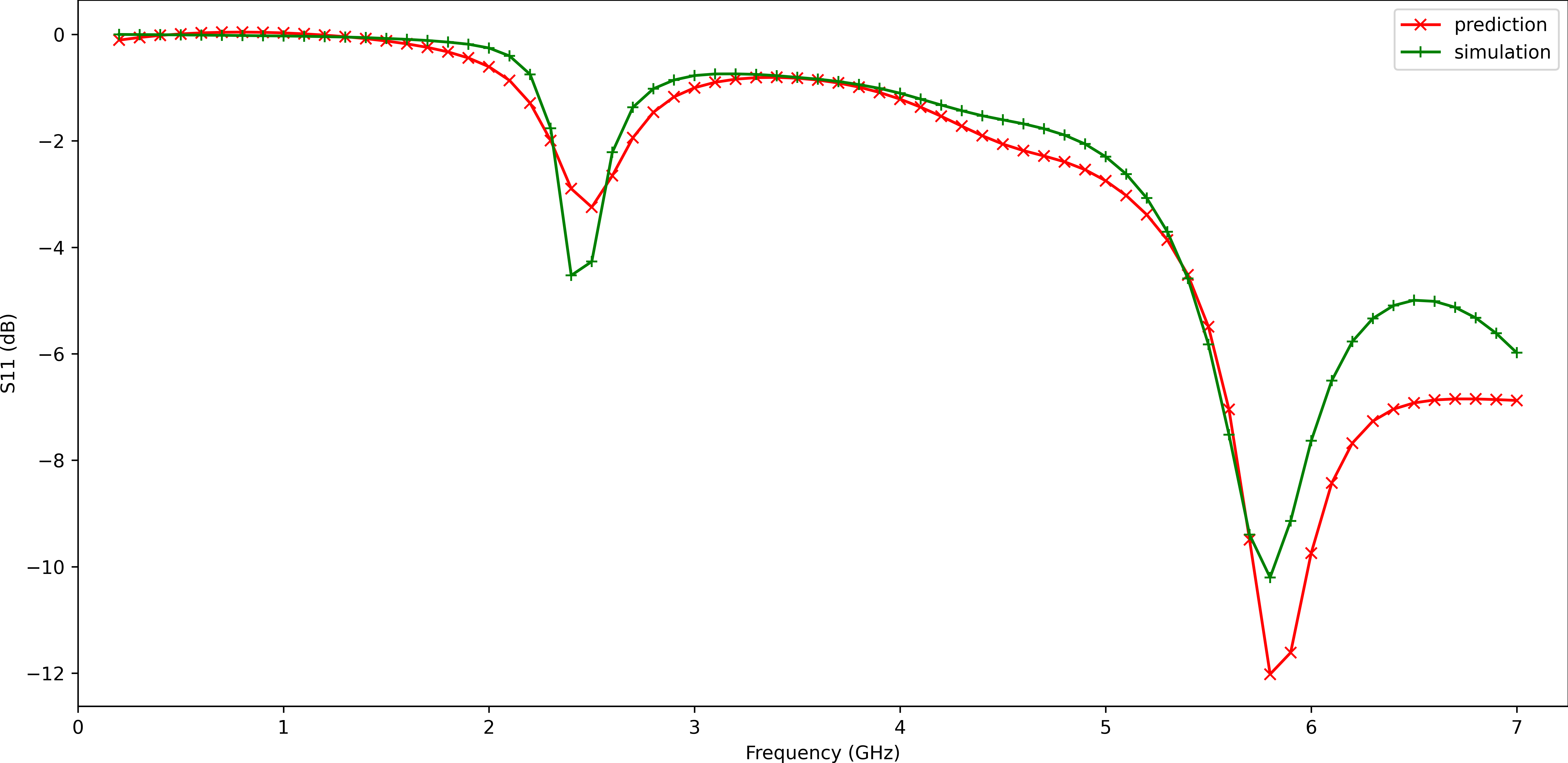}&\includegraphics[width=.48\textwidth]{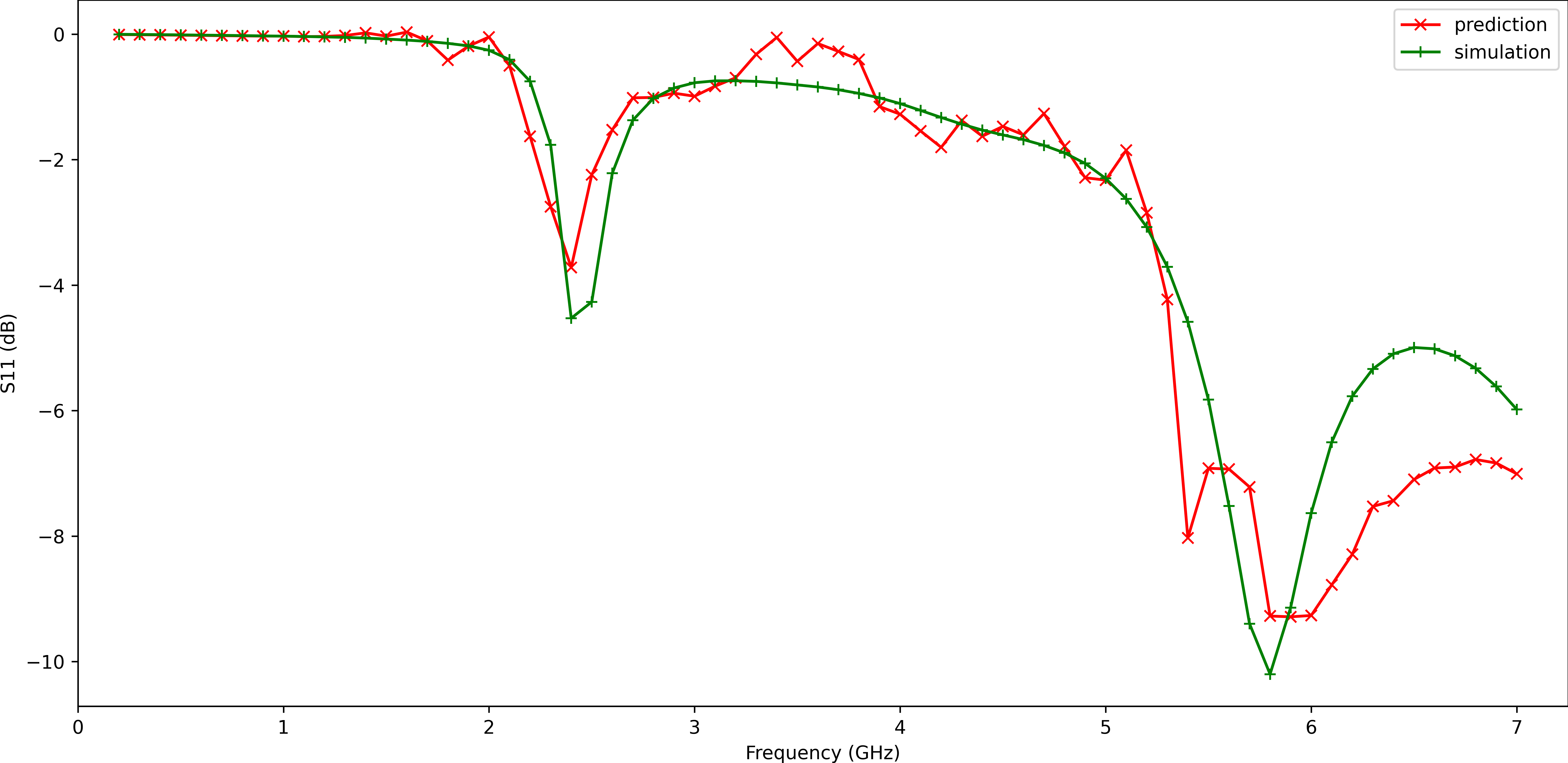}\\
{\bf (a)} \czp & {\bf (b)} Raw\\
\end{tabular}
\caption{Comparison of predicted frequency response {\bf (a)} \czp{} and {\bf (b)} Raw with an 8 layer transformer and image input on two {\it test set} examples, with low MSE (top row) and high MSE (bottom row). Red is the frequency response predicted by the surrogate model, green is the frequency response from CST. For easy examples, \czp{} and raw are both smooth. For hard examples, \czp{} is smooth by design but raw may be non-smooth.}\label{smooth_vs_nonsmooth}
\end{figure*}

\begin{figure}
\begin{tabular}{c}
\includegraphics[width=.48\textwidth]{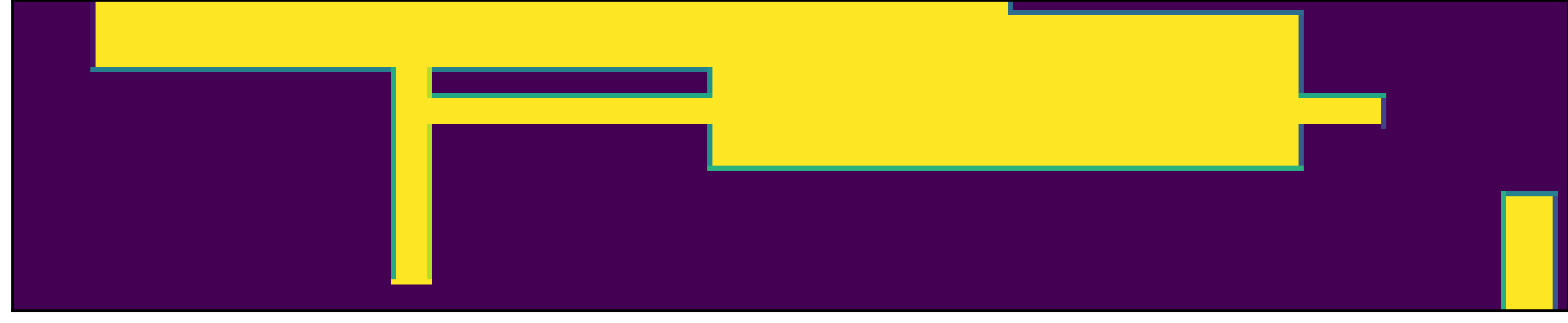}\\
\hline \\
\includegraphics[width=.48\textwidth]{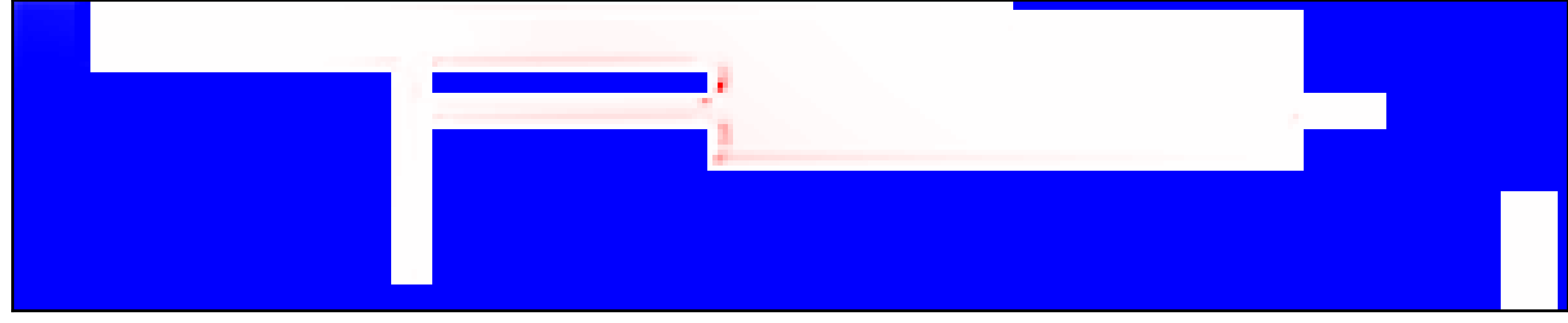}\\
\includegraphics[width=.48\textwidth]{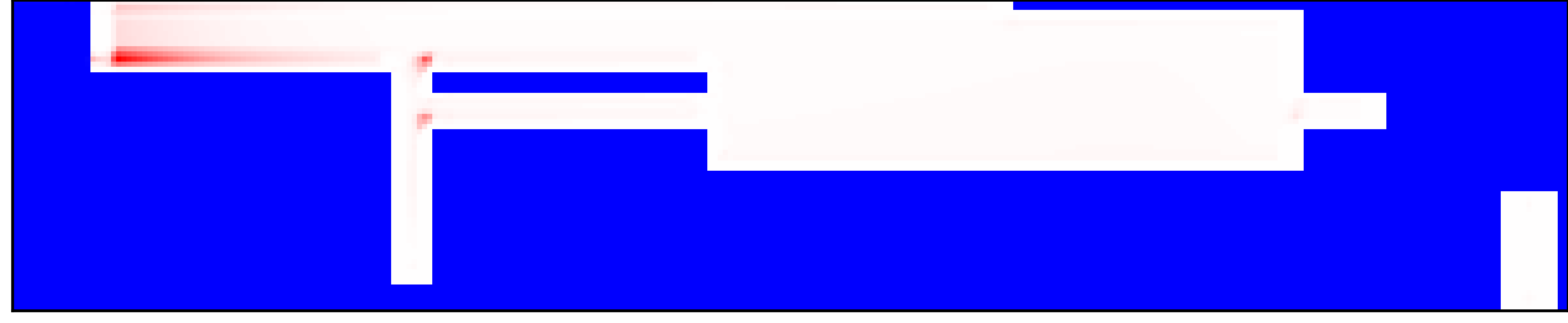}\\
\includegraphics[width=.48\textwidth]{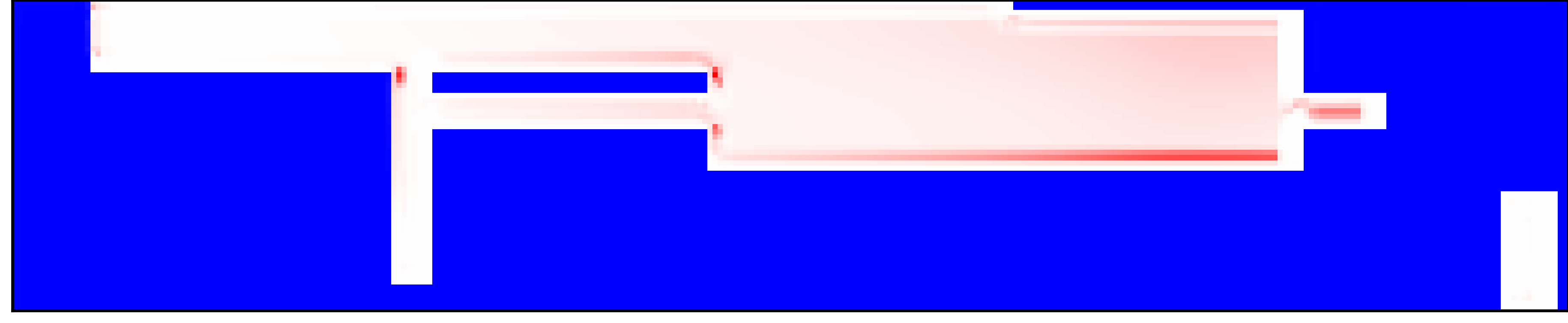}\\
\includegraphics[width=.48\textwidth]{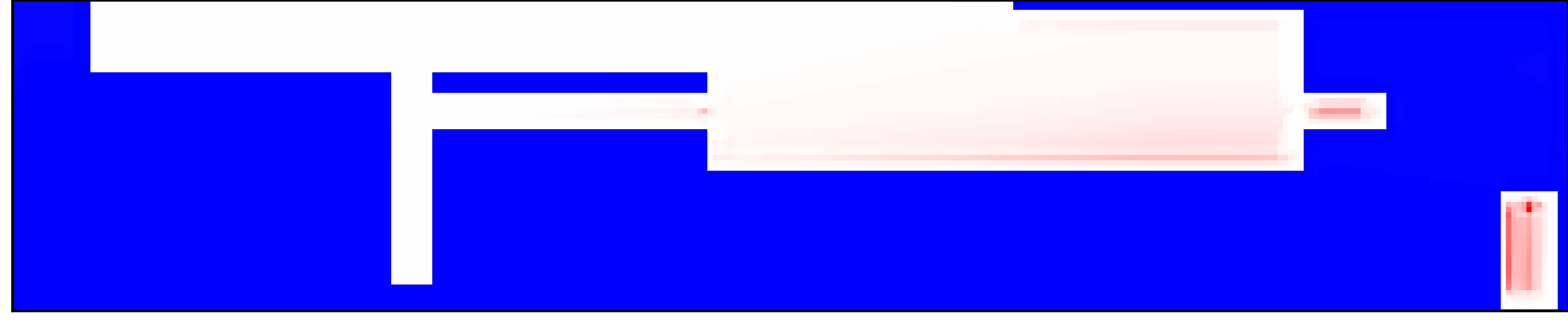}\\
\end{tabular}
\caption{A random antenna configuration (above line) and four (of sixteen total) attention maps (below line) learned by the spatial attention component of the proposed transformer architecture (overlayed on the antenna configuration). The intensity of the red shading indicates the activations of the attention map. Activations are greatest around boundaries and corners.}\label{attn_maps}
\end{figure}
\end{document}